\newtheorem{theorem}{Theorem}
\newtheorem{lemma}{Lemma}
\newtheorem{corollary}{Corollary}
\newtheorem{remark}{Remark}
\newcommand{\ignore}[1]{}
\newcommand{\ph}[1]{\vspace{0mm} \noindent \textbf{#1.}} 
\newcommand{\tempcut}[1]{}
\definecolor{mypink3}{cmyk}{0, 0.7808, 0.4429, 0.1412}
\newcommand{\MEtime}{{$O\left( \frac{n\sqrt{N}}{\epsilon}\sqrt{\log\left(\frac{1}{\delta}\right) } \right)$}}
\newcommand{\ME}{{\sc MedianElimination}}
\renewcommand{\ME}{{\sc BoundedME}}
\newcommand{\GM}{{\sc Greedy-MIPS}}
\newcommand{\LSH}{{\sc LSH-MIPS}}
\newcommand{\PCA}{{\sc PCA-MIPS}}
\newcommand{\DM}{{\sc Diamond-MSIPS}}
\newcommand{\SM}{{\sc Sample-MIPS}}
\newcommand{\newBandit}{{Multi-Armed Bandit with Bounded Pulls}}
\newcommand{\newBanditShort}{{MAB-BP}}
\newcommand{\RPT}{{\sc RPT-MIPS}}
\begin{document}
\title{A Bandit Approach  to Maximum Inner Product Search}

\author{
  Rui Liu\\
  Computer Science and Engineering\\
   University of Michigan, Ann Arbor\\
  \texttt{ruixliu@umich.edu} \\
\And
  Tianyi Wu\\
 Computer Science and Engineering\\
   University of Michigan, Ann Arbor\\
  \texttt{tianyiwu@umich.edu} \\
\And
  Barzan Mozafari\\
  Computer Science and Engineering\\
   University of Michigan, Ann Arbor\\
  \texttt{mozafari@umich.edu} \\
}

\date{}
\maketitle
\pagenumbering{gobble}

\begin{abstract}
 There has been substantial research on sub-linear time approximate algorithms for 
 	 \emph{Maximum Inner Product Search} (MIPS). 
   To achieve fast query time, state-of-the-art techniques require significant preprocessing, 
  which can be a burden when the number of subsequent queries is not sufficiently large to amortize the cost.
  Furthermore,  existing methods do not have the ability to \emph{directly} control the suboptimality of their approximate results
     with theoretical guarantees. 
In this paper, we propose the first approximate algorithm for MIPS that does not require any preprocessing, 
  and allows users to control and bound the suboptimality of the results.
We cast MIPS as a Best Arm Identification problem, and introduce a new bandit setting 
	that can fully exploit the   special structure of MIPS. 
Our approach outperforms  state-of-the-art methods  on both synthetic and real-world datasets.        
  \end{abstract}


\section{Introduction}
The problem of \emph{Maximum Inner Product Search} (MIPS) has received significant attention in recent years \cite{yu2017greedy,nips2014-best,neyshabur2015symmetric} as a key step in many machine learning
algorithms and applications.
For instance, it appears in matrix-factorization-based recommender  systems
\cite{koren2009matrix,cremonesi2010performance,liu2015robust}, multi-class prediction \cite{dean2013fast,jain2009active}, structural SVM 
\cite{joachims2006training,joachims2009cutting}, and vision applications \cite{dean2013fast}. 
The MIPS problem can be formally defined as follows: given a collection of $n$ data vectors, $\mathcal{S} = \{v_1,v_2,\cdots, v_n\}$, 
where $v_i\in \mathbb{R}^{N}, 1\leq i\leq n$, and a query vector $q\in \mathbb{R}^{N}$,
	the goal is to find $v^*\in \mathcal{S}$ that maximizes (or approximately maximizes) the inner product $q^Tv^*$. 
In other words, MIPS is the following problem:
\begin{equation}
v^* =\mathop{arg} \mathop{max}_{v\in \mathcal{S}} q^Tv
\end{equation}
The na\"ive linear search for solving MIPS  requires $O(n\cdot N)$ time to exhaustively compute all $n$ inner products, which  can be daunting for 
	 massive datasets (large $n$) and/or high-dimensional data (large $N$). 
This has led to significant interest in 
	devising sub-linear time algorithms to solve the MIPS problem approximately,
		including the best paper award at NIPS'14 \cite{nips2014-best} and 
			many other proposals \cite{yu2017greedy,neyshabur2015symmetric,bachrach2014speeding}.

\ignore{
For example, previous work has proposed greedy strategies \cite{yu2017greedy}, 
	LSH-based techniques---such as \cite{neyshabur2015symmetric} and the best paper award at NIPS'14 \cite{nips2014-best}---
		and PCA-based approaches \cite{bachrach2014speeding}.}

\ph{Motivation I: Preprocessing Overhead} 
Despite their different merits, existing approximate techniques for MIPS all share a common pattern: 
	they require a substantial time for preprocessing the data set $\mathcal{S}$, during which they construct a
		data structure that can then be used to answer queries more efficiently. 
For example, they construct a hash table \cite{nips2014-best}, a sorted index \cite{yu2017greedy}, 
	 space partition trees \cite{bachrach2014speeding}, or other data structures \cite{auvolat2015clustering}. 
	 Existing methods require various data structures to be built during preprocessing time. 
In fact, the preprocessing time is, in some cases, so large that it even exceeds the time complexity of the na\"ive approach for answering $\log n$ queries, i.e., $O(N n \log n)$ \cite{yu2017greedy}. 
We have summarized the preprocessing time of the previous techniques in Table \ref{TB:pre_time}.
The rationale is that query times will be faster after the preprocessing step. 
The preprocessing time is therefore justified when 
		there are many queries and the set $\mathcal{S}$ remains the same, i.e.,
			once the preprocessing is done it would benefit many subsequent queries.
However, there are many cases where either the number of queries is relatively small (or even $1$), 
	or the set $\mathcal{S}$ changes frequently,
	e.g., the approximate Linear Minimization Oracle (LMO) in Matching Pursuit and 
	 Frank-Wolfe optimization \cite{locatello2017unified,jaggi2013revisiting}.
In these scenarios, the preprocessing time is simply a burden. 
In this paper, our first motivation is to design an approach to MIPS that will not require any preprocessing, while still achieving 
	a query speed-up better than those approaches that do require it. 

\ph{Motivation II: Suboptimality Bounds}	
	 As summarized in Table~\ref{TB:pre_time}, specific parameters such as the number of hash functions or depth of partition trees are used to trade search accuracy with search efficiency.
	 Various data structures with pre-specified parameters~\cite{nips2014-best,bachrach2014speeding,auvolat2015clustering} are built before any query is given, which means that the trade-off is somewhat fixed for all queries. 
	 Thus, the computational cost for a given query is fixed. However, in many real-world scenarios, each query might have a different computational budget. 
	 \cite{yu2017greedy} proposed a greedy method where the user could control the computational budget for each query. The computational budget can be viewed as an efficiency-accuracy knob. Nevertheless, the user can't get a solution with a guaranteed level of optimality in general.\footnote{A more detailed treatment of the previous work is deferred to Section of Related Work} It is crucial for practitioners to have a knob with which the user can explicitly request and guarantee a certain level of optimality for each query~\cite{approx_chapter,mozafari_eurosys2013,mozafari_sigmod2014_diagnosis}, e.g., to know how accurate the solution would be if more computational budget were allowed~\cite{blinkml-tr}. 
Thus, our second motivation in this paper is to design a MIPS algorithm that can bound (and directly control) the suboptimality of 
the returned answer, regardless of the data distribution. 
	Specifically, for any $0< \epsilon< 1$ and $0< \delta < 1$ chosen by the user, 
		the algorithm must be able to guarantee that, 
			with probability at least $1-\delta$, the returned solution $\hat{v}$ is $\epsilon$-optimal with respect to optimal solution $v^*$,
			i.e., $\frac{1}{N}q^T v^* - \frac{1}{N}q^T\hat{v}<\epsilon$. 
This would help us to better understand the behavior of our algorithm theoretically, and would offer 
	a flexible knob for trading off error and computational efficiency in practice.

\begin{table*}[t]
\centering
\caption{State of the art algorithms for MIPS. Here, $n$ is the number of data vectors and $N$ is the vectors' dimensionality.}
\vspace{0.2cm}
\scalebox{0.85}{
\begin{tabular}{|p{2.3cm}|c|c|p{5cm}|p{4cm}|}
\hline
Method & Preprocessing & Query  &  Theoretical Guarantees & Notes\\
             & Time & Time & & \\
\hline
\ME{}  (our method) 
& $0$ & \MEtime{} & Guaranteed to return an $\epsilon$-optimal solution with probability at least $1-\delta$  & User can choose any desired error $0$$<$$\epsilon$$<$$1$ and confidence 
$0$$<$$\delta$$<$$1$  \\
\hline
\GM{}~\cite{yu2017greedy} & $O(Nn\log n)$  & $O(BN)$  & 
                  No guarantees in general.  
                  (They   guarantee optimality with high probability, only for uniformly distributed data and budget $B \geq O(N\log(n)n^{\frac{1}{N}})$
		 & 	For non-uniform data, the results can be arbitrarily poor (e.g., 
				if the largest coordinate of $q^Tv$ is identical for all $v\in \mathcal{S}$, 
					the output will be a random subset)
\\
\hline
 \LSH{}~\cite{nips2014-best,neyshabur2015symmetric} & $O(N n a b)$ & $O(\frac{nN}{2^a}b)$ & 
  They guarantee to return the optimal vector $v^*$ for query $q$ 
 		 with probability  \[1 - \left( 1- \left(1-\frac{\cos^{-1}(q^Tv^*)}{\pi} \right)^a\right)^b\] where 
		 $a$ is the number of bits in each hyper LSH function and
 $b$ is the number of hyper LSH functions 
		 &  
		Since $v^*$ is unknown \emph{a priori}, the users cannot control the lower bound of this probability 
		(e.g., if $q^Tv^* = -1$, this probability will always be $0$ regardless of the values chosen for $a$ and $b$)
\\
\hline
  \RPT{}~\cite{keivani2017improved} & $O(LN n \log n)$ & $O(L \log n)$ & 
  They guarantee to return the optimal vector $v^*$ for query $q$ 
 		 with probability upper bounded by some potential function depending on $q$, $S$ and $L$, where $L$ is the number of trees 
		 &  
		Since $q$ is unknown \emph{a priori}, the users cannot control the lower bound of this probability 
\\
\hline 
\PCA{}~\cite{bachrach2014speeding} & $O(N^2n)$& $O(\frac{nN}{2^d})$ & None  &  $d$ is the depth of the PCA tree \\
\hline 
\end{tabular}
}
\label{TB:pre_time}
\end{table*}

\ph{Our Approach}
Our approach to MIPS  is inspired by the Multi-Armed Bandit (MAB) problem \cite{bubeck2009pure,audibert2010best,even2006action,jamieson2014lil}.
MAB is a predominant model for characterizing the tradeoff between exploration and exploitation in decision-making settings.
	In MAB, there are $n$ arms; each time we pull an arm, it returns a reward
(e.g. a reward generated by sampling from a Guassian distribution).
The true mean of an arm is defined as the mean of the distribution from which its rewards are sampled.
  The goal in MAB is to either (1)   accumulate as much reward as possible, or (2)   identify the best arm (i.e., the one with the highest true mean).
In our paper, our goal is the latter. We cast MIPS as a Best Arm Identification problem:
	we can treat each data vector as an arm, where pulling it means multiplying one of its coordinates with the corresponding 		coordinate from the query vector. 
We must then dynamically decide how many more floating-point multiplications
	 to perform for each inner product, based on the partial results of all  inner products thus far.

There are two different stopping conditions for the 
Best Arm Identification problem: fixed confidence and fixed budget. 
With fixed confidence, the MAB algorithm 
	seeks to minimize the sample complexity---the number of pulls used---while guaranteeing 
		that the returned arm is $\epsilon$-optimal
			 with probability at least $1-\delta$  for any given $0< \delta<1, 0< \epsilon <1$. 
In the fixed budget setting, the MAB algorithm stops once it has used its budget in terms of the sample complexity,  
	 while seeking to return an arm whose true mean is as close as possible to that of the best arm. 
Although our \textbf{Motivation II} is inline with the fixed confidence setting, 		 
	the existing MAB methods for fixed confidence are not effective for the MIPS problem. 
 This is because the existing algorithms are designed for i.i.d. rewards drawn from some unknown distribution over an \emph{infinite} population \cite{bubeck2009pure,audibert2010best}, and thus require many pulls to achieve an accurate estimate of an arm's true mean. 
In MIPS, however,  the useful number of pulls for any arm is upper bounded by $N$ (the vectors's dimension).
In other words,  rewards are sampled \emph{without replacement} from a discrete uniform distribution over a \emph{finite} population. 	
	Therefore, by exploiting this structure, we should be able to significantly lower the number of pulls.
In this paper, we introduce a new setting for Best Arm Identification problem with fixed confidence that suits the special structure of MIPS.
We also propose an algorithm, called \ME{}, inspired by the
	Median Elimination framework \cite{even2002pac}, using a tight statistical bound for sampling without replacement.   

In summary, we make the following contributions:
\begin{itemize}
\item We identify two  desirable motivations for the MIPS problem  that are important in practice but overlooked 
by existing solutions (\textbf{Movitaion I} and \textbf{Motivation II}). We introduce a new MAB setting for Best Arm Identification, where the rewards for each arm are sampled from a large but finite list. We call this  setting Multi-Arm Bandit with Bounded Pulls (MAB-BP). 

\item We propose a new algorithm for MAB-BP, called 
	\ME, which extends Median Elimination with a tight statistical bound.
When applied to MIPS, \ME{}  enjoys  a significantly lower sample complexity than all previous MAB methods developed for  fixed confidence setting. 
  More importantly, as a bandit approach, 
		\ME{} does not require any preprocessing (unlike previous MIPS solutions).

\item Our extensive experiments on both synthetic and real-world datasets  show that \ME's query time is $5$--$10\times$ faster 
			than state-of-the-art MIPS algorithms, despite their use of preprocessing. 

\end{itemize}


\section{Related Work}
\label{sec:related}


\subsection{Existing Approaches to MIPS}

  There are a number of sampling-based methods for MIPS.
For example, \SM{}~\cite{cohen1999approximating} is a scheme that samples $(i, j)\in$$\{1,\cdots,n\}$$\times\{1,\cdots,N\}$ with probability proportional to $v_{i}^{(j)}q^{(j)}$. However, it requires that all candidate and query vectors be nonnegative.  
  \DM{} \cite{ballard2015diamond}
	is another sampling-based approach
		which solves a similar problem, called 
   \emph{maximum squared inner product search (MSIPS)}.
 The goal in MSIPS is to find candidate vectors $v\in \mathcal{S}$ for which $(q^Tv)^2$ is maximized.  
 The solution to MSIPS, however, can be very different than that of MIPS, e.g., the former might return a $v$ whose inner 
 	product with $q$ is a large negative value.
	
Another popular approach   is to reduce MIPS to the nearest neighbor search problem,
	which can then be solved using locality-sensitive hashing \cite{nips2014-best,neyshabur2015symmetric}, neighbor-sensitive hashing~\cite{mozafari_pvldb2015_ksh},
PCA-trees \cite{bachrach2014speeding}, or K-Means approaches \cite{auvolat2015clustering}. 
Nonetheless, all of these methods share a common pattern. Before answering any queries, they conduct
	 a preprocessing  on $\mathcal{S}$
to construct an approach-specific data structure, e.g., hash table in \LSH{}, space partition trees in \PCA{}, or cluster centroids in \cite{auvolat2015clustering}. 
These data structures only contain information about the vectors in $\mathcal{S}$ and are independent of any query. 
Then, for each query, they use 
	an efficient procedure 
	 on their preconstructed data structure
		to select a set of candidate vectors (i.e., a subset of $\mathcal{S}$). 
They perform an exact ranking on this candidate set to return the best vector.  
A recent approach, Randomized Partitioning Tree (\RPT{})~\cite{keivani2017improved},
	 builds a partitioning tree on top of an LSH scheme to solve the MIPS problem. 
\RPT{}  guarantees the exact solution with a probability that depends on the vector set $S$ and
	the given query.  
However, \RPT{} cannot  directly control  	
	   the quality of the returned vector (i.e., suboptimality bounds in \textbf{Motivation II}).
Another approach is \GM{} \cite{yu2017greedy}, which builds on the same algorithmic pattern, 
	but also provides a budget for the size of the candidate set. 
This parameter is the only mechanism that implicitly controls the tradeoff between  precision and  query time.   
Table~\ref{TB:pre_time} summarizes     the 
    theoretical guarantees offered by some of the recent MIPS algorithms.

\subsection{Existing Approaches to MAB}
Best Arm Identification is a popular setting in MAB that aims to identify the best arm by dynamically deciding on how many times to pull each arm.
In the fixed budget setting, 
	the main idea behind algorithms such as Successive Halving \cite{karnin2013almost,jamieson2016non} and Successive Rejects \cite{audibert2010best} is to dynamically allocate the budget to the different arms in order to remove the bad arms in a round-by-round fashion until only one arm is left. 
Naturally, these methods tend to use up their budget 
 	in order to return an arm whose true mean is as close as possible to the optimal arm.  
The fixed budget setting isn't suitable for our problem in this paper.
In the fixed confidence setting,  algorithms share the same idea of 
	dynamically pulling arms and removing the unpromising ones from consideration, 
 	such as Successive Elimination \cite{even2006action}, Exponential Gap Elimination \cite{karnin2013almost}, LUCB \cite{kalyanakrishnan2012pac}, and Lil'UCB \cite{jamieson2014lil}.   
	These algorithms are inline with our \textbf{Motivation II}, as they also seek to minimize sample complexity while guaranteeing that the returned arm is within a pre-specified proximity of the optimal arm.
	However, they cannot be directly applied to the MIPS problem because they assume the rewards are i.i.d. samples from some unknown distribution over an \emph{infinite population},
	 whereas in MIPS, the rewards are sampled without replacement from a discrete uniform distribution over a \emph{finite} list. 

\section{A Bandit Approach to MIPS}
\label{sec:mips}

In this section, we show how the MIPS problem can be viewed as a Best Arm Identification problem with fixed confidence.
 As previously mentioned, the goal in MIPS 
	is to solve the problem:
\[
v^* =\mathop{arg} \mathop{max}_{v\in \mathcal{S}} q^Tv
\]
where $\mathcal{S} = \{v_i | v_i\in \mathbb{R}^{N}, 1\leq i\leq n\}$ is a collection of $n$ data vectors 
 and $q\in \mathbb{R}^{N}$ is a given query.

\ignore{The naive approach needs to compute all inner products. 
The $i$-th inner product is the sum of dimension-wise multiplication between $v_i$ and $q$, i.e., 
$q^Tv_i = v_i^{(1)}q^{(1)} + v_i^{(2)}q^{(2)} + \cdots + v_i^{(N)}q^{(N)} $, 
where $v_i^{(j)}$ and $q^{(j)}$ are the $j$-th coordinates of $v_i$ and $q$, respectively, for $1\leq j \leq N$.   	
We define $R_i = \{ v_i^{(1)}q^{(1)}, v_i^{(2)}q^{(2)}, \cdots, v_i^{(N)}q^{(N)} \}$
If the summation is taken over some subset of entire dimensions, we could get an estimated result of $q^Tv_i$.
If we can identify the optimal vector using some estimated results of inner products, we could save lots of computational time.   

One way to compute the estimated result is to sample i.i.d. from the set $R_i$. In this way, we may not be able to get a $100\%$ accurate estimate even if we have sampled $N$ times from $R_i$. 
Another more efficient way is to sample from $R_i$ without replacement. It will surely give us $100\%$ accurate estimate after sampling $N$ times. 	
In this paper, we will use sampling without replacement due to its efficiency.

Having chosen the sampling strategy, we are still left with one  more question: how to decide the number of times each $R_i$ is sampled so that we can idenfity the optimal vector while saving computational time. This can actually be modeled and solved by Best Arm Identification problem in Multi-Armed Bandit.  	

In traditional Multi-Armed Bandit,  i.i.d. sampling from an unknown distribution over an infinite population is usually assumed. 
This is slightly different from sampling scheme to speed up MIPS problem, where sampling without replacement from 
a finite reward list is used. 
Therefore, a new bandit setting is needed for MIPS. Next, we formally define this new bandit setting. 
}

We cast MIPS as a bandit problem as follows.
	For every data vector $v_i\in \mathcal{S}$, we consider a corresponding arm 
	$a_i$, whose reward has the following true mean 
	$p_i=\frac{1}{N}\sum_{j=1}^{N} v_{i}^{(j)}q^{(j)} = \frac{v_i^Tq}{N}$,  
		where $v_{i}^{(j)}$  and $q^{(j)}$ are the $j$-th coordinates of $v_i$ and $q$, respectively, for $1\leq j\leq N$. 	
	When the arm $a_i$ is pulled $t$ times ($1\leq t\leq N$), 
		its rewards are generated by taking $t$ i.i.d. samples \emph{with} replacement
		 from its reward list $R_i$,  
			defined as the set $R_i=\{ v_{i}^{(1)}q^{(1)}, v_{i}^{(2)}q^{(2)}, \cdots, v_{i}^{(N)}q^{(N)} \}$.

Each time the arm $a_i$ is pulled, returning a reward corresponds to a floating-point operation in MIPS 
	for multiplying one of the coordinates from $v_i$ with its counterpart from $q$, i.e., computing $v_{i}^{(j)}q^{(j)}$ for some $1$$\leq$$j$$\leq$$N$.
The reward lists are initially unknown, but the more we pull an arm, the more we learn about its reward list. 
Our goal in MIPS is to find the arm with the highest true mean---the vector whose inner product with $q$ is the 
highest---using as few floating point operations as possible. 

Unfortunately, in a traditional bandit setting,
	even if we pull the $a_i$ arm $N$ times, we still do not know the exact true mean of $R_i$, i.e., the  
		exact inner product of $v_i$ and $q$. 
This is because traditional bandit problems are designed for 
	an unknown distribution over an infinite population, and hence rely on sampling \emph{with} replacement. 
However, to solve MIPS,
		the rewards are drawn from a finite reward list---i.e., $N$ coordinates.
Thus, if we can exploit this structure and use 
	sampling \emph{without} replacement,
		we must be able to pull the arms significantly fewer times than in a traditional bandit.
Further, once we have pulled the $a_i$ arm $N$ times, we should know the entire content of $R_i$,  
equivalent to the exact computation of the inner product between $v_i$ and $q$.
Next, we formally define this new bandit setting.

\section{Multi-Armed Bandit with Bounded Pulls (MAB-BP)}
\label{sec:bandit}

We now formally introduce a new Multi-Armed Bandit setting, which we call \emph{Multi-Armed Bandit with Bounded Pulls (MAB-BP)}. 
Assume a set of $n$ arms $A = \{a_1, \cdots, a_n\}$.
Each arm $a_i$ is associated with a reward list $R_i = \{R_i^{(1)}, R_i^{(2)},\cdots, R_i^{(N)}  \}$, where $N$ is the size of the reward list.
Here, we assume $R_i^{(j)} \in [0, 1]$, but a similar analysis applies as long as the reward value is bounded.
Every time an arm $a_i$ is pulled, a reward is returned by sampling a value \emph{without} replacement from its reward list $R_i$.
Denote the true mean of reward for arm $a_i$ as $p_i = \frac{1}{N}\sum_{j = 1}^N R_i^{(j)} $.
Thus, once an arm is pulled $N$ times, the mean of the returned rewards is exactly equal to the true mean $p_i$.
Our goal is the same as in a traditional Best Arm Identification: to identify an $\epsilon$-optimal arm
	 with probability at least $1-\delta$ using as few pulls as possible, 
	 where $\epsilon$ and $\delta$ are provided by the user.
We say that an arm $\hat{a}$ is an $\epsilon$-optimal arm if $p_{a^*} - p_{\hat{a}} < \epsilon$, where $a^*$ is the 
optimal arm.

It is easy to see that by choosing $R_i^{(j)}=v_{i}^{(j)}q^{(j)}$ for $1\leq j\leq N$, one can cast MIPS 
	as a  MAB-BP problem.  However, note that MAB-BP can be used 
		to solve any problem of the form:
\[
\mathop{arg} \mathop{max}_{1\leq i\leq n} \sum_{j=1}^N f(i, j)
\]
where $f$ can be an arbitrary function.

For MIPS, $f(i, j) = v_i^{(j)}q^{(j)}$. 
However, one can also use
MAB-BP to solve the Nearest Neighbor Search (NNS) problem: given a collection of vectors $\mathcal{S} =     \{v_1,\cdots, v_n\}$,
 where $v_i\in \mathbb{R}^{N}, 1\leq i\leq n$, and a query vector $q\in \mathbb{R}^{N}$,
 the goal is to find $v^*$$\in$$\mathcal{S}$ that is closest to $q$, 
 i.e.,  $ \|q-v^*\|^2 = \sum_{j=1}^N (q^{(j)} - v^{*(j)})^2$ is minimized.
 In this case, $f(i, j)$$=$$- (q^{(j)} - v_i^{(j)})^2$.

\section{\ME{}: An Algorithm for Solving \newBanditShort{}}

Existing bandit algorithms   are sub-optimal for \newBanditShort{}. 
The fundamental reason is   bandit algorithms have to estimate 
the minimum number of samples needed to obtain an  estimate $\hat{p}$ of the true mean $p$
that satisfies the given error $\epsilon$ and confidence $\delta$ requirements, i.e., 
$\mathbb{P}\left[ \hat{p} - p \leq \epsilon  \right] \geq 1 - \delta$.

The efficiency of a bandit algorithm depends on how accurately it can estimate the number of required samples,
	based on the   reward values it has observed for each arm.
This goal is achieved using  concentration inequalities~\cite{boucheron2013concentration}.
 Since in traditional bandit the rewards are typically assumed to be sampled from a sub-Gaussian distribution  over an infinite population~\cite{jamieson2014best}, 
 	these algorithms often rely on 
	Hoeffding's bound or the law of iterated logarithm (LIL) bound to determine the sample size for mean estimation
		of the reward distribution 
		 over an infinite population. 
 However, as noted earlier, the reward values in MAB-BP  are sampled without replacement and from a finite list.
Thus, an algorithm that can exploit this additional information should be able to solve the MAB-BP problem
	more efficiently (i.e., with lower sample complexity).

Next, we derive a concentration inequality for sampling without replacement 
	and then present our algorithm.

\subsection{A Concentration Inequality for MAB-BP}
\label{sec:concentration}

We use the following corollary from \cite{bardenet2015concentration}. 

\begin{corollary}[Corollary 2.5 in \cite{bardenet2015concentration}] \label{CO:sample_without_replacement_original}
Let $\mathcal{X} = (x_1, x_2, \cdots, x_N)$ be a finite set of size $N>1$ in $[a, b]$ with mean $\mu = \frac{1}{N}\sum_{i=1}^Nx_i$, and $(X_1, \cdots, X_m)$ be a list of size $m< N$ sampled without replacement from $\mathcal{X}$.
  Then for any $m\leq N$, and any $\delta \in [0,1]$, it holds 
\begin{equation} \small
\mathbb{P}\left[ \frac{1}{m}\sum_{t=1}^m X_t - \mu \leq (b-a)\sqrt{\frac{\rho_m \log(1/\delta)}{2m}} \right] \geq 1-\delta
\end{equation}
where $\rho_m$ is defined as
\begin{equation} \small
\rho_m =\mathop{min} \left\{
(1-\frac{m-1}{N}),
(1-\frac{m}{N})(1+1/m)
\right\}
\end{equation}
\end{corollary}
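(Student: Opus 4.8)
The plan is to prove the corollary by the Chernoff--Cram\'er method applied to a martingale with shrinking increments, following the Serfling / Bardenet--Maillard strategy. First I would reduce the claim to a moment-generating-function estimate: it suffices to show that for every $\lambda > 0$,
\[
\mathbb{E}\left[ \exp\left( \lambda \Big( \tfrac{1}{m}\textstyle\sum_{t=1}^m X_t - \mu \Big) \right) \right] \leq \exp\left( \frac{\lambda^2 (b-a)^2 \rho_m}{8 m} \right),
\]
since then Markov's inequality applied to $e^{\lambda(\bar X_m - \mu)}$, optimized over $\lambda$, gives $\mathbb{P}[\bar X_m - \mu \geq t] \leq \exp\!\big(-2mt^2/((b-a)^2\rho_m)\big)$, and setting the right-hand side equal to $\delta$ and solving for $t$ recovers exactly the stated deviation $(b-a)\sqrt{\rho_m \log(1/\delta)/(2m)}$.

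The core of the argument is to express $\bar X_m - \mu$ through a martingale. Model the sample as the first $m$ coordinates of a uniformly random permutation $(X_1,\dots,X_N)$ of $\mathcal{X}$, set $\mathcal{F}_k = \sigma(X_1,\dots,X_k)$, and let $Y_k$ be the mean of the $N-k$ not-yet-drawn elements, with $Y_0 = \mu$. A short computation gives $\mathbb{E}[X_{k+1}\mid \mathcal{F}_k] = Y_k$, so $(Y_k)_{0 \le k < N}$ is a martingale, and one checks the two identities $\bar X_m - \mu = \frac{N-m}{m}(Y_0 - Y_m)$ and $Y_{k+1} - Y_k = (Y_k - X_{k+1})/(N-k-1)$. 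Since $Y_k$ and $X_{k+1}$ both lie in $[a,b]$ (indeed in the range of the remaining population), the conditional Hoeffding lemma gives $\mathbb{E}[e^{s(Y_{k+1}-Y_k)}\mid \mathcal{F}_k] \le \exp\!\big(s^2(b-a)^2/(8(N-k-1)^2)\big)$. Telescoping the increments and iterating these conditional bounds yields $\mathbb{E}[e^{\lambda(\bar X_m - \mu)}] \le \exp\!\big(\frac{\lambda^2(b-a)^2}{8}\cdot\frac{(N-m)^2}{m^2}\sum_{j=N-m}^{N-1} j^{-2}\big)$, and estimating $\sum_{j=N-m}^{N-1} j^{-2}$ from above produces a variance proxy of the form $\rho_m/m$ with $\rho_m$ of leading order $1-m/N$.

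The delicate point -- and the main obstacle -- is to extract the exact $\rho_m = \min\{1-(m-1)/N,\ (1-m/N)(1+1/m)\}$ rather than a looser constant. The first term in the minimum comes from the Serfling-type estimate of the increment sum above. The second term is sharper when $m$ is close to $N$ and comes from a complementary-set argument: because $\sum_{t=1}^m X_t = N\mu - \sum_{t=m+1}^N X_t$, the sample mean of the first $m$ draws is an affine function of the sample mean of the $N-m$ trailing draws, so one reruns the martingale estimate on the smaller complementary sample and translates the bound back. Taking the better of the two bounds for each pair $(m,N)$, then performing the $\lambda$-optimization and inversion of the first paragraph, completes the proof. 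Since the statement is imported verbatim from \cite{bardenet2015concentration}, one may alternatively just invoke their Corollary~2.5 directly; the sketch above records why it holds.
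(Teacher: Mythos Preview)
The paper does not prove this statement at all: it is quoted verbatim as Corollary~2.5 of \cite{bardenet2015concentration} and used as a black box to derive Lemma~\ref{LM:sample_without_replacement}. Your sketch is the right one --- it is precisely the Serfling/Bardenet--Maillard martingale argument that underlies the cited result, and your final sentence already notes that simply invoking the reference suffices here, which is exactly what the paper does.
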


Based on this corollary, we could get the following concentration inequality for sampling without replacement, as in the following lemma\footnote{All omitted proofs can be found in the supplementary material.}.  
\begin{lemma} \label{LM:sample_without_replacement}
Let $\mathcal{X} = (x_1, x_2, \cdots, x_N)$ be a finite set of size $N>1$ in $[a, b]$ with mean $\mu = \frac{1}{N}\sum_{i=1}^Nx_i$, and $(X_1, \cdots, X_m)$ be a list of size $m< N$ sampled without replacement from $\mathcal{X}$.
Then, for any given  $0<\epsilon<1, 0<\delta<1$,  if 
\begin{equation}
m = \mathop{min}\{\frac{u+1}{1+\frac{u}{N}}, \frac{u+\frac{u}{N}}{1+\frac{u}{N}} \}
\end{equation}
where $u= \frac{\log(1/\delta)}{2} \frac{(b-a)^2}{\epsilon^2}  $,
then we have 
\begin{equation}
\mathbb{P}\left[ \frac{1}{m}\sum_{t=1}^m X_t - \mu \leq  \epsilon  \right] \geq 1-\delta
\end{equation}.
\end{lemma}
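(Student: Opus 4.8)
The plan is to deduce the lemma directly from Corollary~\ref{CO:sample_without_replacement_original}. That corollary guarantees, for any $m<N$ and any $\delta$, that $\frac{1}{m}\sum_{t=1}^m X_t - \mu \le (b-a)\sqrt{\rho_m\log(1/\delta)/(2m)}$ with probability at least $1-\delta$. So if I can show that, for the prescribed value of $m$, this data-independent deviation is at most $\epsilon$, then the event on which the corollary's bound holds is contained in $\{\frac{1}{m}\sum_t X_t - \mu \le \epsilon\}$, and the conclusion follows. Writing $u=\frac{\log(1/\delta)}{2}\frac{(b-a)^2}{\epsilon^2}$ as in the statement, squaring and rearranging shows that $(b-a)\sqrt{\rho_m\log(1/\delta)/(2m)}\le\epsilon$ is equivalent to $u\,\rho_m\le m$. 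Since $\rho_m=\min\{1-\tfrac{m-1}{N},\,(1-\tfrac{m}{N})(1+\tfrac{1}{m})\}$ and $u>0$, it is enough to check that the chosen $m$ satisfies $u(1-\tfrac{m-1}{N})\le m$ \emph{or} $u(1-\tfrac{m}{N})(1+\tfrac{1}{m})\le m$, because then $u\rho_m$ is bounded by $u$ times the corresponding factor, which is at most $m$. I would also quickly record that $0<m<N$ (this uses $N>1$) and $\rho_m>0$, so that the appeal to the corollary and the square-root step are legitimate.

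Next I would determine which of the two terms in the definition of $m$ is actually the minimum. A one-line computation gives $\frac{u+1}{1+u/N}-\frac{u+u/N}{1+u/N}=\frac{1-u/N}{1+u/N}$, so $m=\frac{u+u/N}{1+u/N}$ when $u\le N$ and $m=\frac{u+1}{1+u/N}$ when $u>N$. In the first case I check the first inequality: clearing denominators, $u(1-\tfrac{m-1}{N})\le m$ is equivalent to $m\ge\frac{u+u/N}{1+u/N}$, which holds with equality for this $m$, so we are done. In the second case I check the second inequality: substituting $m=\frac{u+1}{1+u/N}$, the factor $1-\tfrac{m}{N}$ collapses to $\frac{N-1}{N(1+u/N)}$, and after clearing all denominators and expanding, the inequality $u(1-\tfrac{m}{N})(1+\tfrac{1}{m})\le m$ reduces to $0\le(N+u)^2$, which always holds.

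The main obstacle is the algebra in the $u>N$ case: one must substitute a fraction into a product of three factors, collect terms, and recognize that the difference of the two sides is exactly the perfect square $(N+u)^2$; this is short but must be done carefully. Everything else is bookkeeping. One minor caveat worth a remark: the prescribed $m$ need not be an integer, so in the algorithm one actually uses $\lceil m\rceil$; since for $m\in(0,N)$ both inequalities above are only relaxed when $m$ is increased (each left-hand side is nonincreasing in $m$ and each right-hand side increasing) and $\lceil m\rceil\le N$ because $N$ is an integer, the guarantee is preserved.
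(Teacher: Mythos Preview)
Your proof is correct and follows essentially the same route as the paper: reduce to $m/\rho_m \geq u$ via Corollary~\ref{CO:sample_without_replacement_original} and handle the two branches of $\rho_m$ separately. Your treatment of the second branch is slightly slicker---you substitute the prescribed $m$ directly and recognize the perfect square $(N+u)^2$, whereas the paper relaxes the quadratic $(1+u/N)m^2-(u-u/N)m-u\ge0$ to $(1+u/N)m^2-(u-u/N)m-(u+1)\ge0$ so that it factors as $[(1+u/N)m-(u+1)](m+1)\ge0$---but the underlying argument is the same, and your remarks on the range of $m$ and on rounding to $\lceil m\rceil$ are welcome additions.
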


We can see that as the error bound $\epsilon$ approaches $0$, the required sample size $m$ will approach the finite set size $N$, but never exceed $N$. 
This matches our previous intuition that it is ineffective to pull one arm more than $N$ times in \newBanditShort{}. 
It is worth noting that a lemma similar to Lemma \ref{LM:sample_without_replacement} can   be derived  
	to show  that $\mathbb{P}\left[ \frac{1}{m}\sum_{t=1}^m X_t - \mu \geq - \epsilon  \right] \geq 1-\delta$,
based on a   corollary similar to Corollary \ref{CO:sample_without_replacement_original}.

\subsection{The \ME{} Algorithm}
\label{sec:ourbandit}
 Our proposed algorithm, \ME{}, is based on the median elimination strategy, but tailored to our MAB-BP setting. 
The basic idea of median elimination strategy is that, given a set of arms $A$, we pull each of these arms for a certain number of times to update their empirical means, discard the worst half in terms of their updated empirical means thus far, and repeat until only one arm remains.
In Algorithm~\ref{alg:me}, 
	we present \ME{} for the more general case of 
		identifying the top $K$ arms with the highest true means of rewards.
(The best arm identification is a special case, where $K$=$1$.) 
To be more specific about the difference between Algorithm~\ref{alg:me} and the general median elimination strategy, the number of times that we pull each remaining arm is $t_l - t_{l-1}$ for the $l$-th iteration, where $t_l$ is defined in the line $7$ of Algorithm~\ref{alg:me}. 
Since we want to identify the top $K$ arms, at the end of the $l$-th iteration, we discard $\left \lceil\frac{|S_l| -K}{2} \right\rceil$ arms with least empirical means thus far where $|S_l|$ is the number of remaining arms at the begining of $l$-th iteration, rather than discarding the worst half. 
In addition, we will stop when only $K$ arms remain.

 To simplify our notation, we enumerate the arms according to their true mean, i.e., $p_1$$>$$p_2$$>$$\cdots$$>$$p_n$.
Let $T^*=\{ 1, 2, \cdots, K \}$ be the set of best $K$ arms.\footnote{We use the index $i$ instead of $a_i$
	to simplify the notation.} 
For any set $T$ consisting of $K$ arms, we say that $T$ is \textbf{$\epsilon$-optimal} if
$\tilde{p}_{T^*} - \tilde{p}_T \leq \epsilon$,
where $\tilde{p}_S$ is the $K$-th highest true mean among the arms in $S$. 
We also define the \textbf{suboptimality} of $T$ to be $\tilde{p}_{T^*} - \tilde{p}_T$. 
 
 Given  $\epsilon$ and $\delta$ provided by the user,
 	  \ME's goal is to identify a set of $K$ arms 
	  that is  $\epsilon$-optimal with probability at least $1-\delta$, using as few pulls as possible.

 \begin{algorithm}
\caption{\ME{} Algorithm (for top-$K$)}
\label{alg:me}
\begin{algorithmic}[1]
\STATE{\textbf{input:} $K\geq 1, \epsilon>0, \delta>0$, and a set of arms $A$}
\STATE{\textbf{output:} a set of $K$ arms that is $\epsilon$-optimal with probability $1-\delta$    }
\STATE{}
\STATE set $S_1 = A, \epsilon_1 = \frac{\epsilon}{4}, \delta_1 =\frac{\delta}{2}, l =1$
\STATE set $t_0=0$

\WHILE{$|S_l| > K$}
\STATE{set $t_l = m\left( \frac{2}{\epsilon_l^2} \log\left( \frac{2(|S_l|-K)}{\delta_l \left(\left \lfloor\frac{|S_l| -K}{2} \right\rfloor +1  \right)} \right)    \right)$}
\STATE{ Pull every arm $a\in S_l$ for $t_l - t_{l-1}$  times, and let $\hat{p}_a^l$ denote its empirical mean since the begining of the algorithm}
\STATE{Find the $\left \lceil\frac{|S_l| -K}{2} \right\rceil$-th value of $\hat{p}_a^l$ in ascending order, and denote it as $\bar{p}_l$}
\STATE{$S_{l+1} = S_l\setminus \{ a\in S_l: \hat{p}_a^l \leq \bar{p}_l  \} $ (more precisely, remove $\left \lceil\frac{|S_l| -K}{2} \right\rceil$ arms with
	the least empirical means thus far)}
\STATE{$\epsilon_{l+1} = \frac{3}{4}\epsilon_l,  
\delta_{l+1} = \frac{\delta_{l}}{2}, l= l+1$}
\ENDWHILE
\STATE {\bfseries return} $S_l$
\end{algorithmic}
\end{algorithm}

We use the following function to simplify our presentation:
\begin{equation} \label{EQ:m_func} \small
m(u) = \mathop{min}\left\{\frac{u+1}{1+\frac{u}{N}}, \frac{u+\frac{u}{N}}{1+\frac{u}{N}} \right\}
\end{equation}
We can see that $m(u) < N$ as long as $u > 0$.
We have the following lemma.

\begin{lemma} \label{LM:one_round_pac}
For   Algorithm \ref{alg:me}, at any iteration $l$,  we have
$\mathbb{P}\left[ \tilde{p}_{S_l}\leq \tilde{p}_{S_{l+1}} +\epsilon_l  \right] \geq 1-\delta_l$
.
\end{lemma}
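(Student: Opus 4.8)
The plan is to show that in iteration $l$, with high probability the $K$-th highest true mean among the surviving arms $S_{l+1}$ is not much smaller than the $K$-th highest true mean among $S_l$. First I would fix iteration $l$ and write $\tilde p_{S_l}$ for the $K$-th largest true mean in $S_l$; call the arm achieving it $a^{\dagger}$ (so $p_{a^\dagger} = \tilde p_{S_l}$). The event to avoid is that too many arms with true mean $\geq \tilde p_{S_l} - \epsilon_l$ get discarded in this round: if at least $|S_l| - K - \lceil (|S_l|-K)/2\rceil + 1 = \lfloor (|S_l|-K)/2\rfloor + 1$ of them survive, then among the survivors there are at least $K$ arms with true mean $\geq \tilde p_{S_l} - \epsilon_l$, which forces $\tilde p_{S_{l+1}} \geq \tilde p_{S_l} - \epsilon_l$, i.e. the desired inequality. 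So it suffices to bound the probability that $\lfloor (|S_l|-K)/2\rfloor + 1$ or more of the "good" arms (true mean $\geq \tilde p_{S_l}$, of which there are at least $K$ counting $a^\dagger$ and the top ones, plus all arms above threshold) are eliminated.

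Next I would set up the key estimation step. An arm $a$ gets eliminated in round $l$ only if at least $\lceil(|S_l|-K)/2\rceil$ arms have empirical mean $\leq \hat p_a^l$; in particular a good arm $a$ (with $p_a \geq \tilde p_{S_l}$) can be among the $\lceil(|S_l|-K)/2\rceil$ discarded arms only if there is a "bad" arm $b$ (with $p_b < \tilde p_{S_l} - \epsilon_l$, i.e. $b \notin T^*$ essentially, or more precisely $p_b$ small) whose empirical mean exceeds that of $a$. The standard median-elimination argument: define for a good arm $a$ the bad event $\hat p_a^l < p_a - \epsilon_l/2$ and for a bad arm $b$ the event $\hat p_b^l > p_b + \epsilon_l/2$; by Lemma~\ref{LM:sample_without_replacement} (applied with error $\epsilon_l/2$ and the appropriate $\delta$), each such event has probability at most $\delta_l \lfloor(|S_l|-K)/2 +1\rfloor / (2(|S_l|-K))$, because the sample size $t_l = m\!\left(\frac{2}{\epsilon_l^2}\log\frac{2(|S_l|-K)}{\delta_l(\lfloor(|S_l|-K)/2\rfloor+1)}\right)$ is exactly chosen so that $u = \frac{\log(1/\delta')}{2}\cdot\frac{(1)^2}{(\epsilon_l/2)^2} = \frac{2}{\epsilon_l^2}\log(1/\delta')$ with $\delta' = \delta_l(\lfloor(|S_l|-K)/2\rfloor+1)/(2(|S_l|-K))$. (Here I use that rewards lie in $[0,1]$ so $b-a=1$, and the two-sided version of the lemma mentioned in the text.)

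Then I would do the counting/Markov step. Let $Z$ be the number of good arms $a$ whose empirical mean satisfies $\hat p_a^l < p_a - \epsilon_l/2$. For a good arm to be wrongly discarded it is not enough that its own estimate is off; but the clean way is: a good arm $a$ with accurate estimate ($\hat p_a^l \geq p_a - \epsilon_l/2 \geq \tilde p_{S_l} - \epsilon_l/2$) can only be discarded if at least $\lceil(|S_l|-K)/2\rceil$ arms have empirical mean $\leq \hat p_a^l$, which requires that many arms with true mean below $\tilde p_{S_l} - \epsilon_l$ to have inflated estimates — then compare expected numbers and invoke Markov's inequality on the count of arms with inaccurate estimates. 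Concretely, $\mathbb{E}[\#\{\text{inaccurate estimates among relevant arms}\}] \leq (|S_l|-K)\cdot \delta' = \delta_l(\lfloor(|S_l|-K)/2\rfloor+1)/2$, so by Markov the probability that this count reaches $\lfloor(|S_l|-K)/2\rfloor+1$ is at most $\delta_l/2 \leq \delta_l$. On the complement, no good arm is discarded (or at most $\lceil(|S_l|-K)/2\rceil - 1$ of them, leaving enough good arms), hence $\tilde p_{S_{l+1}} \geq \tilde p_{S_l} - \epsilon_l$.

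The main obstacle I anticipate is getting the combinatorial bookkeeping exactly right — matching the floor/ceiling in the number of discarded arms $\lceil(|S_l|-K)/2\rceil$ against the $\lfloor(|S_l|-K)/2\rfloor+1$ appearing in the $\delta'$ denominator, and carefully defining which arms count as "good" relative to the threshold $\tilde p_{S_l} - \epsilon_l$ so that surviving $\lfloor(|S_l|-K)/2\rfloor+1$ of them genuinely certifies $\tilde p_{S_{l+1}} \geq \tilde p_{S_l} - \epsilon_l$ (this uses $|S_l| - \lceil(|S_l|-K)/2\rceil = K + \lfloor(|S_l|-K)/2\rfloor$). The concentration input and the union-bound/Markov step are routine once $t_l$ is unpacked via $m(\cdot)$; the indexing is where care is needed. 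I would also double-check the edge case $|S_l| - K = 1$, where only one arm is discarded and the bound must still go through.
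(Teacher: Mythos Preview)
Your proposal follows essentially the same approach as the paper: partition $S_l$ into ``good'' arms (those within $\epsilon_l$ of $\tilde p_{S_l}$) and ``bad'' arms, invoke Lemma~\ref{LM:sample_without_replacement} with error $\epsilon_l/2$ and the confidence $\delta' = \delta_l(\lfloor(|S_l|-K)/2\rfloor+1)/(2(|S_l|-K))$ implied by the definition of $t_l$, and then apply Markov's inequality to a count of ``bad'' events. The one place your sketch is vaguer than the paper is the exact random variable fed into Markov: the paper's clean choice is $n_{\mathrm{bad}} = \#\{j \text{ bad} : \hat p_j^l \ge \hat p_{z_l}^l\}$, where $z_l$ is the arm with $K$-th highest true mean in $S_l$; each such event has probability at most $2\delta'$ (via the two-term union bound $\{\hat p_j \ge p_j + \epsilon_l/2\} \cup \{\hat p_{z_l} \le p_{z_l} - \epsilon_l/2\}$), there are at most $|S_l|-K$ bad arms, so $\mathbb{E}[n_{\mathrm{bad}}] \le (\lfloor(|S_l|-K)/2\rfloor+1)\delta_l$ and Markov gives failure probability $\le \delta_l$ --- this is exactly the ``standard median-elimination argument'' you allude to, and resolves the bookkeeping you flagged as the main obstacle.
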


Based on the above Lemma~\ref{LM:one_round_pac}, we could get the main theoretical property of Algorithm~\ref{alg:me}.   

\begin{theorem} \label{THM:pac_me}
The \ME{} algorithm (Algorithm~\ref{alg:me}) is guaranteed to return $\epsilon$-optimal solution with probability at least $1-\delta$.
\end{theorem}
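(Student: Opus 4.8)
The plan is to lift the single-round guarantee of Lemma~\ref{LM:one_round_pac} to the whole run of Algorithm~\ref{alg:me} by a union bound over iterations, and then telescope the resulting chain of inequalities, using the fact that the schedules for $\epsilon_l$ and $\delta_l$ are geometric and sum to $\epsilon$ and $\delta$ respectively.

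First I would pin down the number of iterations. Each round discards $\lceil (|S_l|-K)/2\rceil$ arms, so $|S_{l+1}|-K = \lfloor (|S_l|-K)/2\rfloor$; starting from $|S_1|=n$, the quantity $|S_l|-K$ is (deterministically) halved with rounding down each round and reaches $0$ after $L = \lceil \log_2(n-K)\rceil$ rounds, at which point the loop exits with $|S_{L+1}|=K$. Since $|S_l|\ge K$ throughout, $\tilde p_{S_l}$ (the $K$-th largest true mean among arms in $S_l$) is well defined at every round, and $S_{L+1}$ is precisely the set returned. Note $L$ does not depend on the random samples, so there is no issue applying a union bound of fixed length.

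Next I would set up the failure events. For $l=1,\dots,L$ let $E_l = \{\tilde p_{S_l} \le \tilde p_{S_{l+1}} + \epsilon_l\}$; Lemma~\ref{LM:one_round_pac} gives $\mathbb{P}[E_l^c]\le \delta_l$. With $\delta_1=\delta/2$ and $\delta_{l+1}=\delta_l/2$ we have $\delta_l = \delta/2^l$, hence by the union bound $\mathbb{P}\big[\bigcup_{l=1}^L E_l^c\big] \le \sum_{l=1}^L \delta/2^l \le \sum_{l\ge 1}\delta/2^l = \delta$. So with probability at least $1-\delta$ all of $E_1,\dots,E_L$ hold simultaneously; on that event, telescoping gives $\tilde p_{S_1} - \tilde p_{S_{L+1}} = \sum_{l=1}^L (\tilde p_{S_l}-\tilde p_{S_{l+1}}) \le \sum_{l=1}^L \epsilon_l$. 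With $\epsilon_1=\epsilon/4$ and $\epsilon_{l+1}=\tfrac34\epsilon_l$ we have $\epsilon_l = \tfrac{\epsilon}{4}(\tfrac34)^{l-1}$, so $\sum_{l=1}^L \epsilon_l \le \sum_{l\ge 1}\tfrac{\epsilon}{4}(\tfrac34)^{l-1} = \tfrac{\epsilon}{4}\cdot 4 = \epsilon$. Finally, since $S_1=A$ contains every arm (in particular $T^*$), $\tilde p_{S_1} = \tilde p_A = \tilde p_{T^*}$, and $S_{L+1}$ is the returned set, so $\tilde p_{T^*} - \tilde p_{S_{L+1}} \le \epsilon$ with probability at least $1-\delta$, i.e.\ the returned set is $\epsilon$-optimal, as claimed. (For $K=1$ this is exactly the best-arm-identification statement of the theorem.)

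The routine parts are the two geometric-series evaluations and the bookkeeping on loop length. The one point that needs care is that the per-round events share randomness (the empirical means at round $l+1$ reuse the samples drawn at round $l$), so independence across rounds cannot be assumed; this is why the argument is driven by a union bound over the $E_l$ together with a deterministic telescoping identity, rather than by any product-form probability estimate. I expect this to be the only real subtlety; everything else follows directly from Lemma~\ref{LM:one_round_pac} and the chosen schedules.
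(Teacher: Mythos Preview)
Your proposal is correct and follows essentially the same approach as the paper: lift Lemma~\ref{LM:one_round_pac} to the full run via a union bound over rounds, then sum the geometric schedules $\sum_l \delta_l \le \delta$ and $\sum_l \epsilon_l \le \epsilon$. The paper's proof is terser (it just writes the two series), while you additionally spell out the loop length, the telescoping identity, the identification $\tilde p_{S_1}=\tilde p_{T^*}$, and the dependence caveat that forces a union bound---all of which are implicit in the paper's argument.
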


Note that \ME{} is never slower than the na\"ive search, which has the $O(nN)$ time complexity:
\begin{corollary}
For each arm, the number of times it is pulled by Algorithm~\ref{alg:me} is upper-bounded by $N$.
\end{corollary}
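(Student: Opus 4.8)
The plan is to track, for an arbitrary fixed arm $a$, the cumulative number of pulls it has received, and to show that this count is always exactly the value $t_l$ attached to the \emph{last} iteration in which $a$ participates — a quantity that \ME{} keeps strictly below $N$ by construction. First I would record the nesting $S_{l+1}\subseteq S_l$ (since $S_{l+1}$ is obtained from $S_l$ by deleting arms), so membership of $a$ in $S_l$ implies membership in every $S_1,\dots,S_l$. Consequently there is a well-defined largest iteration index $L$ at which $a$ is still present: if $a$ is eliminated, $L$ is the iteration in whose elimination step $a$ is dropped; if $a$ ends up in the returned set, $L$ is the last iteration that entered the while loop. In iterations $1,\dots,L$ arm $a$ is pulled $t_j-t_{j-1}$ times in iteration $j$ (line~8 of Algorithm~\ref{alg:me}), and in no other iteration. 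Since $t_0=0$, the total number of pulls of $a$ telescopes to $\sum_{j=1}^{L}(t_j-t_{j-1}) = t_L$.

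Second I would bound $t_L<N$. By the while-loop guard $|S_L|>K$, so $k:=|S_L|-K$ is a positive integer, and the argument fed to $m(\cdot)$ in line~7 is $u_L=\frac{2}{\epsilon_L^2}\log\!\bigl(\frac{2k}{\delta_L(\lfloor k/2\rfloor+1)}\bigr)$. An elementary check shows $\frac{2k}{\lfloor k/2\rfloor+1}\ge 2$ for every integer $k\ge 1$: the case $k=1$ gives $\frac{2}{1}=2$ directly, and for $k\ge 2$ one uses $\lfloor k/2\rfloor+1\le k$. Since $\delta_L=\delta/2^L<1$, the logarithm's argument is therefore at least $2/\delta_L>2>1$, so $\log(\cdot)>0$ and hence $u_L>0$. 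By the remark immediately following \eqref{EQ:m_func}, $m(u)<N$ whenever $u>0$; thus $t_L=m(u_L)<N$. Combining with the first step, arm $a$ receives at most $t_L<N$ pulls, and since $a$ was arbitrary the corollary follows. (The conclusion is robust to rounding each $t_l$ up to an integer: $m(u_L)<N$ strictly forces $\lceil m(u_L)\rceil\le N$.)

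The only mildly delicate point — the ``main obstacle,'' such as it is — is the verification that $u_L$ is strictly positive, i.e., that the logarithm's argument never collapses to $1$ or below even in the extreme cases $k\in\{1,2\}$ where $\frac{2k}{\lfloor k/2\rfloor+1}$ attains its minimum value $2$; this is exactly where the strictness $\delta_L<1$ is used. Everything else is bookkeeping: the set nesting, the telescoping of the incremental pull counts, and the quoted monotone-in-$\epsilon$, bounded-by-$N$ behavior of $m(\cdot)$.
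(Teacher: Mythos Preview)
Your proposal is correct and follows essentially the same approach as the paper: observe that the cumulative number of pulls an arm has received by iteration $l$ is exactly $t_l$, and then invoke the property $m(u)<N$ (the paper writes $m(u)\le N$) to conclude. You are simply more careful than the paper's terse argument, explicitly spelling out the telescoping $\sum_{j\le L}(t_j-t_{j-1})=t_L$ and verifying that the argument $u_L$ fed to $m(\cdot)$ is strictly positive; the paper elides both of these points.
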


\ME{}'s time complexity is also lower than   Median Elimination~\cite{even2002pac}, which is $O\left(\frac{n}{\epsilon^2} \log(\frac{1}{\delta}) \right)$:
 \begin{corollary} \label{CO:time_complexity}
The time complexity of Algorithm~\ref{alg:me} is \MEtime{}.
\end{corollary}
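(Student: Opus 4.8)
The plan is to bound the total number of arm pulls performed by Algorithm~\ref{alg:me}; in the MIPS instantiation each pull is one floating-point multiplication, and the per-round bookkeeping (finding the order statistic $\bar p_l$ and forming $S_{l+1}$) costs only $O(|S_l|)$, hence $O(n)$ in total, so it is subsumed by the bound we derive. I will take $K=O(1)$ (the MIPS case is $K=1$). The first and crucial step is an elementary estimate on the sampling function of Equation~\ref{EQ:m_func}: since $m(u)\le\frac{u+1}{1+u/N}=\frac{N(u+1)}{N+u}$ and $(N+u)^2\ge N(u+1)$ for $N\ge 1$, $u\ge 0$, one gets
\[
m(u)\ \le\ \sqrt{N(u+1)}\ =\ O\!\left(\sqrt{Nu}\right)\quad(u\ge 1).
\]
This is where the finite-list structure pays off: it turns the $O(u)=O(\epsilon^{-2}\log(1/\delta))$ per-arm cost of plain Median Elimination into $O(\sqrt N\,\epsilon^{-1}\sqrt{\log(1/\delta)})$, which is exactly the shape of \MEtime{}.

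Next I would evaluate the round-$l$ quantities. With $u_l=\frac{2}{\epsilon_l^2}\log\!\big(\frac{2(|S_l|-K)}{\delta_l(\lfloor(|S_l|-K)/2\rfloor+1)}\big)$ and the elementary fact $\frac{2x}{\lfloor x/2\rfloor+1}<4$ for every integer $x\ge 1$, we have $u_l\le\frac{2}{\epsilon_l^2}\log(4/\delta_l)$. Substituting $\epsilon_l=\frac{\epsilon}{4}(3/4)^{l-1}$ (so $\epsilon_l^{-2}=\frac{16}{\epsilon^2}(16/9)^{l-1}$) and $\delta_l=\delta/2^l$ yields $u_l=O\!\big(\epsilon^{-2}(16/9)^{l-1}(\log(1/\delta)+l)\big)$, hence $\sqrt{u_l+1}=O\!\big(\epsilon^{-1}(4/3)^{l-1}\sqrt{\log(1/\delta)+l}\big)$. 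The recursion $|S_{l+1}|-K=\lfloor(|S_l|-K)/2\rfloor$ at least halves $|S_l|-K$ each round, so $|S_l|\le n/2^{l-1}+K$ and the loop runs for $O(\log n)$ rounds. Finally, an arm eliminated at round $l$ — or one of the $K$ survivors, taking $l$ to be the last round — has been pulled exactly $t_l=m(u_l)$ times in total, and at most $|S_l|$ arms have last round $l$, so the total number of pulls is at most $\sum_{l\ge1}|S_l|\,m(u_l)$. (One uses that $t_l$ is nondecreasing: $\epsilon_l^{-2}$ grows by the factor $16/9$ each round while $\log(g_l/\delta_l)$ with $g_l\in[2,4)$ is itself nondecreasing, so $u_l$, and hence $m(u_l)$, is nondecreasing; if one preferred not to argue this, replacing $t_l$ by $\max_{l'\le l}t_{l'}$ leaves the bound unchanged.)

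Then I would combine the three estimates. The $n/2^{l-1}$ part of $|S_l|$ contributes
\[
\sum_{l\ge 1}\frac{n}{2^{l-1}}\,\sqrt N\,\sqrt{u_l+1}\ =\ O\!\left(\frac{n\sqrt N}{\epsilon}\sum_{l\ge1}\Big(\tfrac23\Big)^{l-1}\sqrt{\log(1/\delta)+l}\right),
\]
and since $\sqrt{\log(1/\delta)+l}\le\sqrt{\log(1/\delta)}+\sqrt l$ this sum equals $\sqrt{\log(1/\delta)}\sum_{l\ge1}(2/3)^{l-1}+\sum_{l\ge1}(2/3)^{l-1}\sqrt l=3\sqrt{\log(1/\delta)}+O(1)$. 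The additive $K$ in $|S_l|$ contributes $\sum_{l=1}^{O(\log n)}K\,m(u_l)=O\!\big(K\sqrt N\,\epsilon^{-1}n^{\log_2(4/3)}\sqrt{\log(1/\delta)+\log n}\big)$, which is $o\big(n\sqrt N\,\epsilon^{-1}\big)$ for constant $K$ because $\log_2(4/3)<1$; likewise the ``$+1$''s inside $\sqrt{u_l+1}$ contribute only $O(n\sqrt N)$. Reading $\sqrt{\log(1/\delta)}$ as $\max\{1,\sqrt{\log(1/\delta)}\}$ so that these lower-order terms are absorbed, the total is $O\!\big(\frac{n\sqrt N}{\epsilon}\sqrt{\log(1/\delta)}\big)=$ \MEtime{}.

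I expect the only real obstacle to be the first step — guessing and verifying $m(u)\le\sqrt{N(u+1)}$, which is what produces the $\sqrt N$, the $\epsilon^{-1}$ (rather than $\epsilon^{-2}$), and the $\sqrt{\log(1/\delta)}$ — together with the observation that the per-round costs form a geometric series dominated by its $l=1$ term, i.e. that the $2^{-l}$ shrinkage of $|S_l|$ outpaces the $(4/3)^l$ growth of $\sqrt{u_l}$. Everything else (floors and ceilings, the additive $K$, the mild monotonicity of $t_l$, and checking the bookkeeping is lower order) is routine.
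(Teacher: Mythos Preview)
Your proposal is correct and follows essentially the same route as the paper: the key estimate $m(u)=O(\sqrt{Nu})$, the per-round accounting $\sum_l |S_l|\,t_l$ with $|S_l|\lesssim n/2^l$, and the geometric series with ratio $2/3$ coming from $(4/3)^{l-1}/2^{l-1}$. You are in fact more careful than the paper, which simply asserts $m(u)\le O(\sqrt{Nu})$ and absorbs the $l$-dependent part of $\log(1/\delta_l)$ into the final big-$O$ without comment; your explicit verification of $m(u)\le\sqrt{N(u+1)}$, your splitting of $\sqrt{\log(1/\delta)+l}$, and your treatment of the additive $K$ and the monotonicity of $t_l$ fill in details the paper leaves implicit.
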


\begin{remark}
When Algorithm~\ref{alg:me} is applied to the MIPS problem, the above bound indicates that the running time is sublinear in the dimension of vectors, but linear in the size of vector set $\mathcal{S}$. 
This implies that our approach is especially effective for very high-dimensional data. 
To mitigate the potential issue of linear dependence on the size of $\mathcal{S}$, we could exploit the geometric structure or similarity among vectors from $\mathcal{S}$. 
The tradeoff here is that this would now require some preprocessing. For example, we could find the convex hull of the set $\mathcal{S}$ first, and then only focus on the set of extreme points that form the convex hull, because the solution of the MIPS problem is guaranteed to always include at least one of these extreme points. 
Thus, when the number of extreme points is much smaller than the size of $\mathcal{S}$, our algorithm becomes sublinear in the size of $\mathcal{S}$.
\end{remark}

\section{Experiments}

Our experiments aim to (1) empirically validate the theoretical guarantees of Theorem~\ref{THM:pac_me} and 
 (2) compare  our method against the state-of-the-art.

\ph{Datasets}
Since Theorem~\ref{THM:pac_me} is a worst-case guarantee, we use an adversarily-generated synthetic dataset to verify its correctness.
Then, 
we use both synthetic and real-world datasets to compare 
	our algorithm with several state-of-the-art techniques.
For each dataset, we used $10^4$ vectors with $10^5$ dimensions.

\ph{Baselines}
We compared our method against the following state-of-the-art methods:
\begin{itemize}
\item \LSH{}~\cite{nips2014-best,neyshabur2015symmetric},   
which is a popular method for MIPS. We used the nearest neighbor transformation  proposed in \cite{bachrach2014speeding} and the LSH function, 
as suggested in \cite{neyshabur2015symmetric}. We used the standard amplification procedure, i.e., the final result is an OR-construction of $b$ hyper LSH hash     functions and each hyper LSH function is an AND-construction of $a$ random projections.
\item \GM{}~\cite{yu2017greedy}, which is a recently proposed method that uses a budget $B$ to control the time complexity of the query time. 
\item \PCA{}~\cite{bachrach2014speeding}, which uses the depth of the PCA tree to control the time-precision tradeoff.
\end{itemize}

\ph{Comparison Metrics}
We compare different algorithms by  varying their parameters in order to explore their tradeoffs between precision and online speedup.
Precision is defined as the fraction of true top $K$ solutions in the returned top $K$ solutions. 
Online speedup of an algorithm is defined as the query time required by the na\"ive  (i.e., exhaustive) search
	divided by the query time of that algorithm.
Recall that, unlike the baselines, our algorithm does not require any preprocessing. 
However, we ignore the preprocessing time of the baselines in our comparisons, showing that our algorithm's online speedup 	
	is still superior despite the lack of any preprocessing.

\subsection{Characteristics of the \ME{} Algorithm}
\label{sec:charac_me}
  Theorem \ref{THM:pac_me} provides a PAC bound for \ME{}. In other words, with the 
$\delta$ and $\epsilon$ provided by the user, \ME{} is guaranteed to return an $\epsilon$-optimal 
solution with probability at least $1-\delta$. Note that this is a worst-case guarantee, and in most cases
 	we expect the returned solution to be much better than that. 
Therefore, to empirically validate our worst-case guarantee, we design an adversarial dataset as follows (we will use other realistic datasets in 
later experiments).

To generate an adversarial dataset, 
	we use $10^4$ arm, each with a list of $10^5$ reward values.
For each arm $a$, we choose its true mean $r_a$ uniformly at random from $[0, 1]$.
Then, the rewards for that arm are generated with every reward being $1$ with probability $r_a$ and being $0$ with probability $1-r_a$.
When an arm is pulled---i.e., a sample is drawn from the rewards list without replacement---the rewards with value $1$ are returned 
	before those with value $0$. This is to make the arms as indistinguishable as possible to the algorithm, 
		thus causing an adversarial scenario. 

In this experiment, we vary $\epsilon$ between $0$ and $0.6$. For each value of $\epsilon$, we try all values of $\delta$ from 
the set $\{0.01, 0.05, 0.1, 0.2, 0.3\}$. For each pair of $\epsilon$ and $\delta$, we run \ME{} $20$ times, each time on a different randomly generated adversarial dataset (as described above). We then measure the $(1-\delta)$-percentile of the list of suboptimalities
	for each specific pair of $\epsilon$ and $\delta$. 
Figure~\ref{fig:syn_check} reports the average of these suboptimalities for each value of $\epsilon$. Since the suboptimality is always less than its corresponding value of $\epsilon$, it confirms that they are indeed
   smaller than  their corresponding values of $\epsilon$, i.e., validating Theorem \ref{THM:pac_me}.     

\begin{figure*}[!t]
   \begin{minipage}{0.31\textwidth}
     \vspace{0.1\linewidth}
     \centering
     \includegraphics[width=1.0\linewidth]{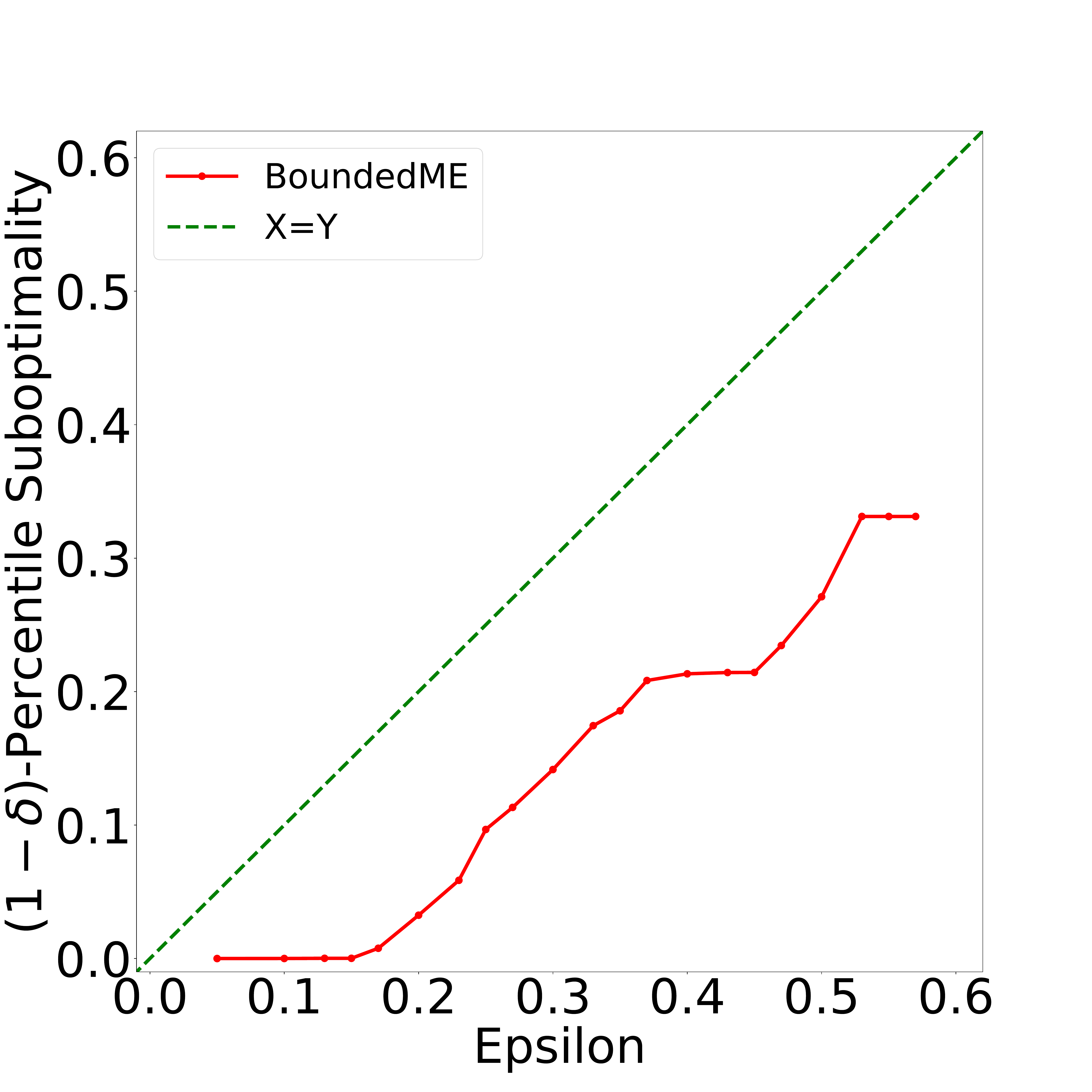}
     \vspace{0.3cm}
\caption{Correctness of \ME's guarantees}\label{fig:syn_check}
   \end{minipage}
   \begin{minipage}{0.8\textwidth}
     \centering
     \subfigure[top $5$ solutions]{
	 \includegraphics[width=.4\linewidth]{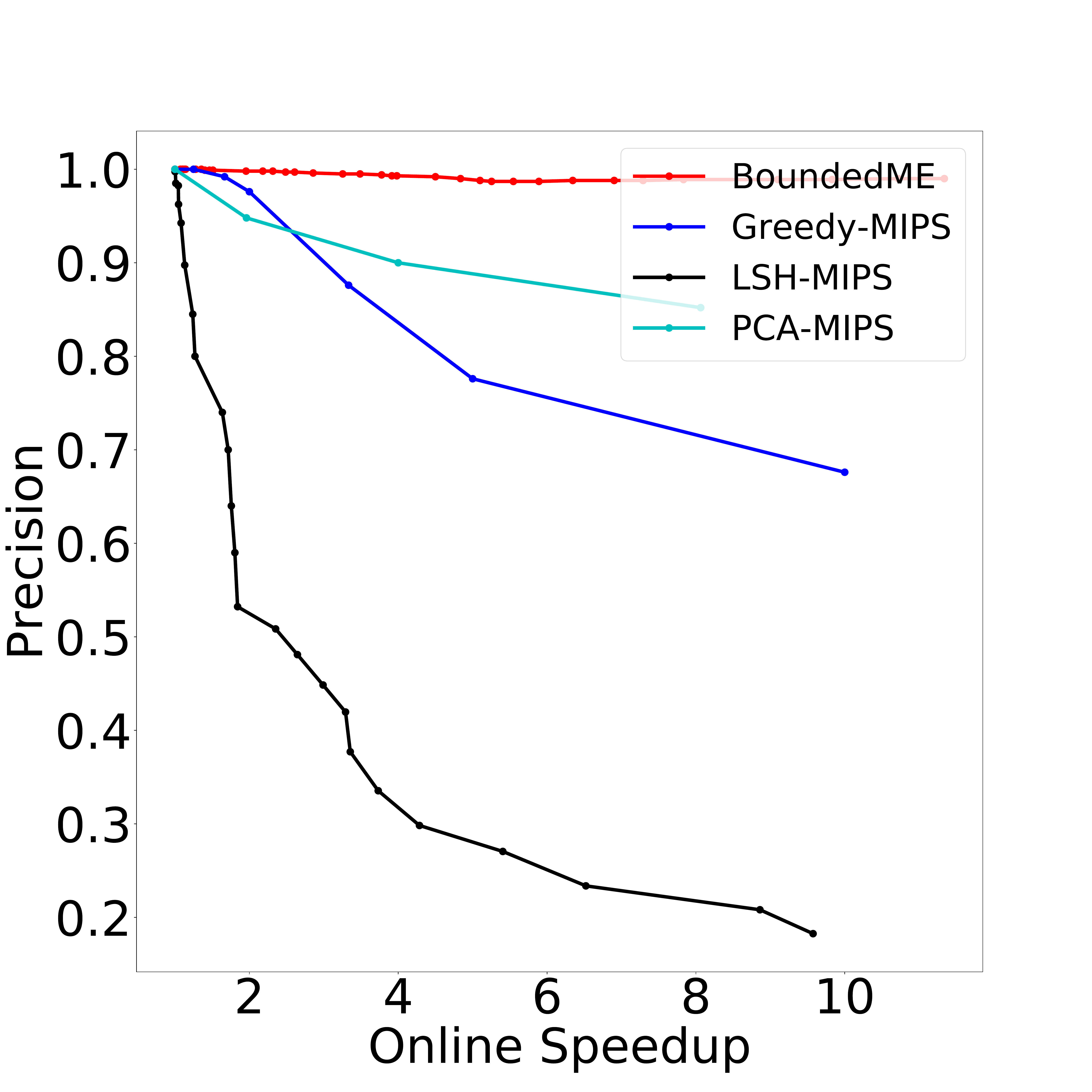}
     }
	 \subfigure[top $10$ solutions]{
     \includegraphics[width =.4\linewidth]{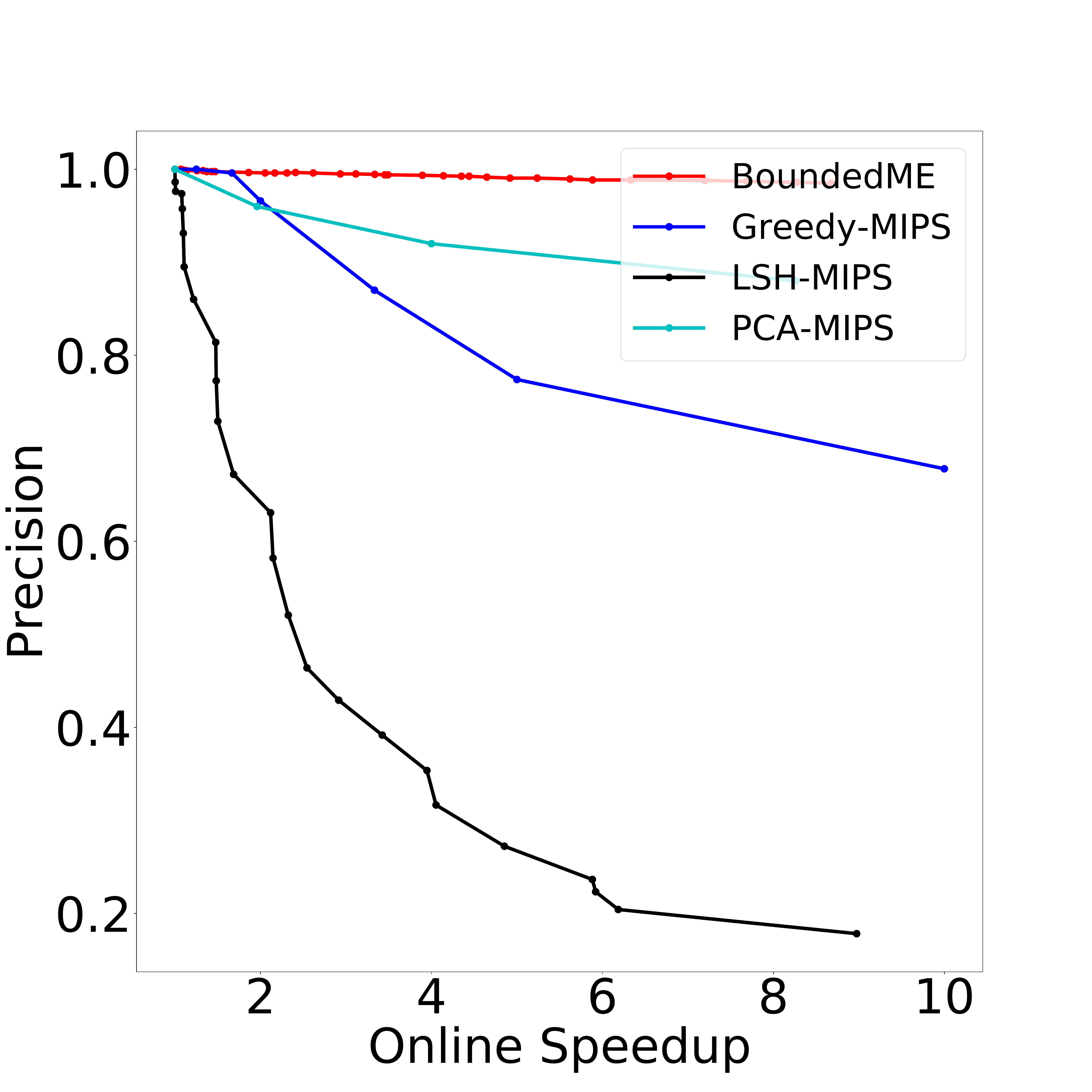}
	}
    \caption{Synthetic Gaussian dataset}\label{fig:syn_guassian}
   \end{minipage}
   \vspace{-0.2cm}
\end{figure*}

\subsection{\ME{} vs. Other MIPS Algorithms on Synthetic Datasets}
\label{expr:syn}

We generate two synthetic datasets, where the vector values are drawn from Gaussian  and uniform distributions, respectively.  
  For \ME{}, we varied  $\epsilon,\delta$$\in$$[0, 1]$. For \LSH{},  we varied $a$$\in$$[1,20]$ and $b$$\in$$[1, 50]$. For \GM{}, we varied $B$ from $10\%$ to $100\%$ of the dataset size. For \PCA{}, we varied the tree depth 
   in $[0, 20]$. We run experiments for both the cases of returning the top $5$ and $10$ solutions. 
As shown in Figures \ref{fig:syn_guassian} and \ref{fig:syn_uniform},  when the online speedup is small, all methods have very high precision. However, when online speedup becomes larger, the precisions achieved by other methods start to drop quickly, while \ME{} can still maintain high precision. This demonstrates that \ME{} outperforms these previous methods, despite its lack of preprocessing time.

\begin{figure}[!t]
\centering
\subfigure[top $5$]{
\includegraphics[width = 3.7 cm]{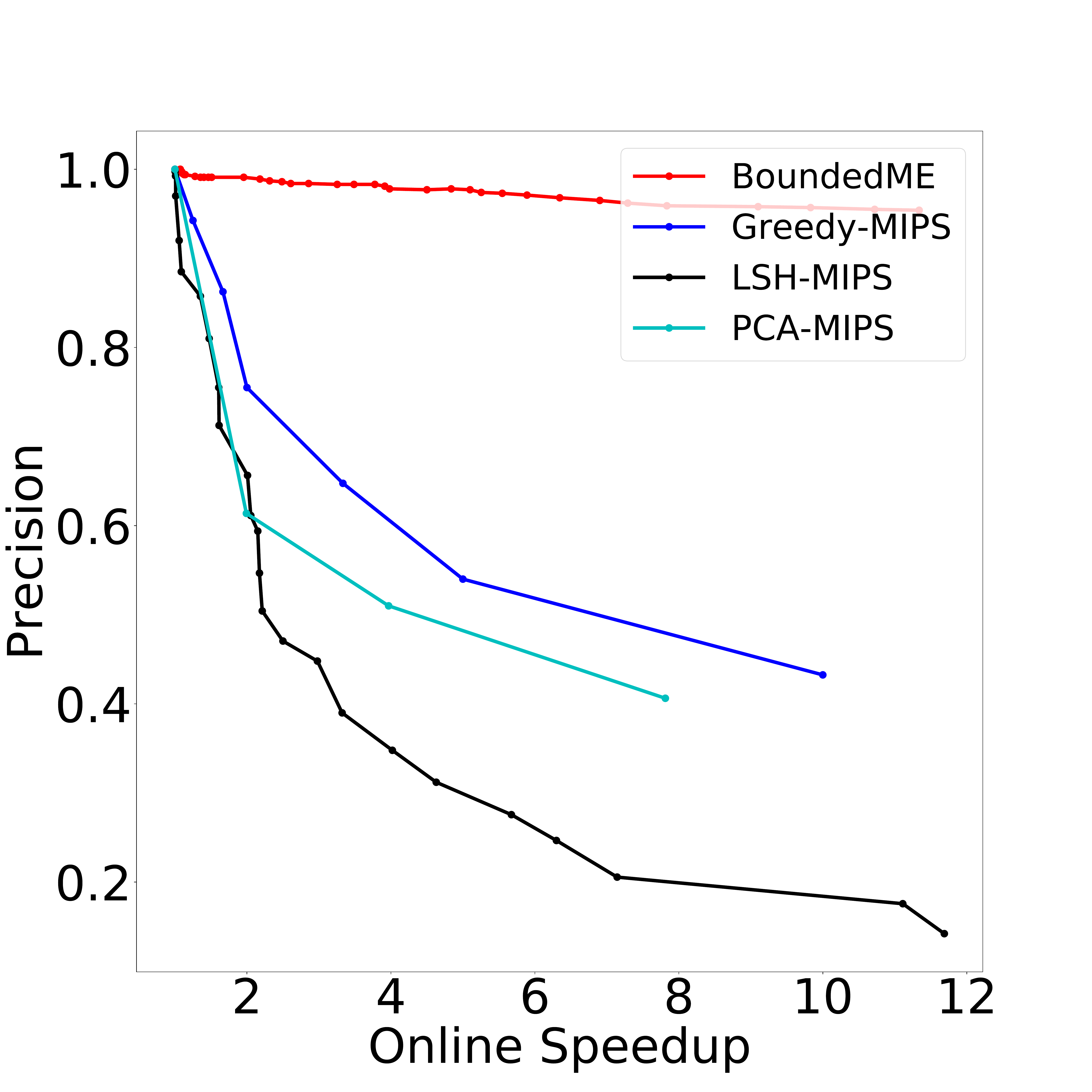}
}
\subfigure[top $10$]{
\includegraphics[width = 3.7 cm]{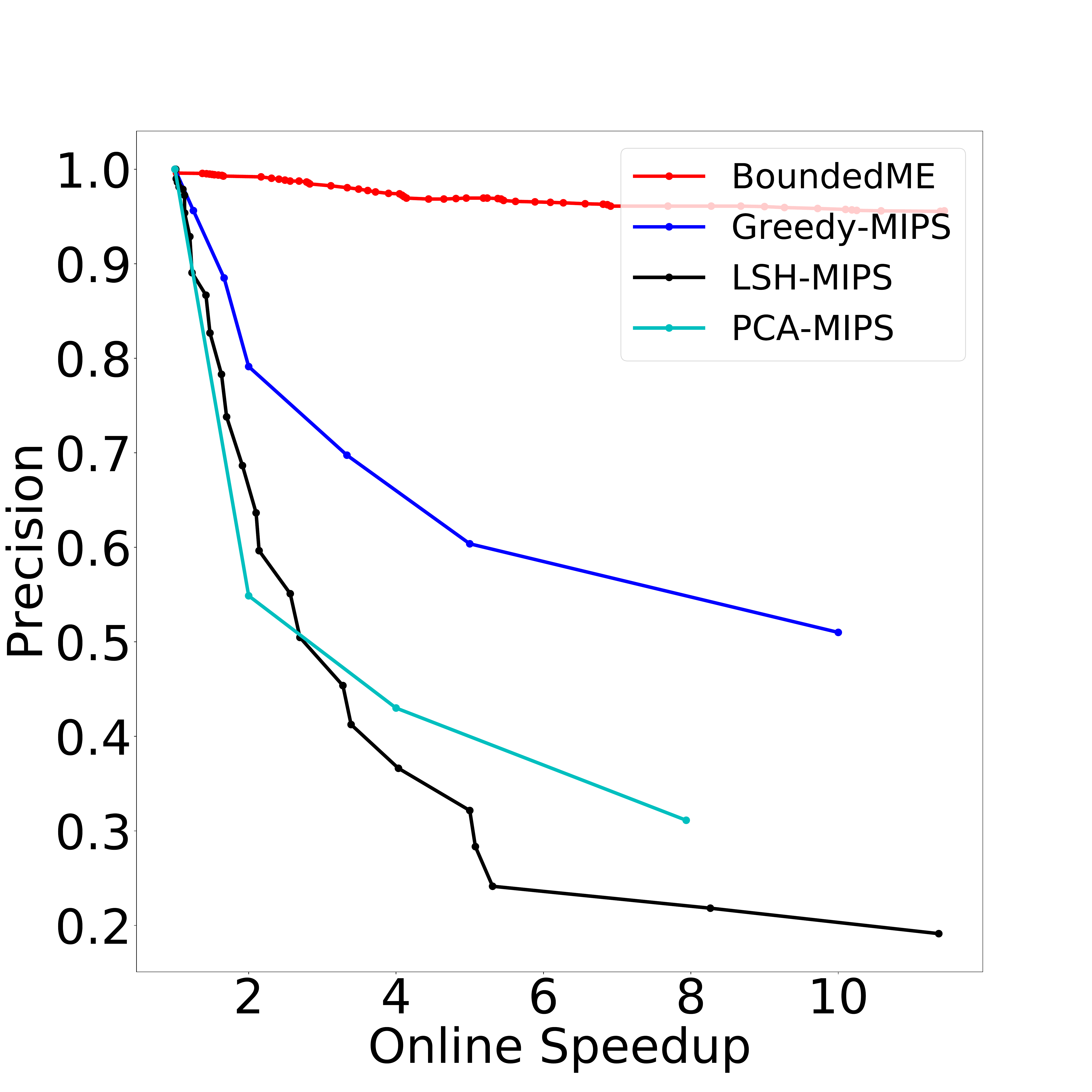}
}
\caption{Synthetic uniform dataset}
\label{fig:syn_uniform}
\end{figure}

\begin{figure}[!th]
\centering
\subfigure[Netflix Dataset]{
\includegraphics[width = 3.7 cm]{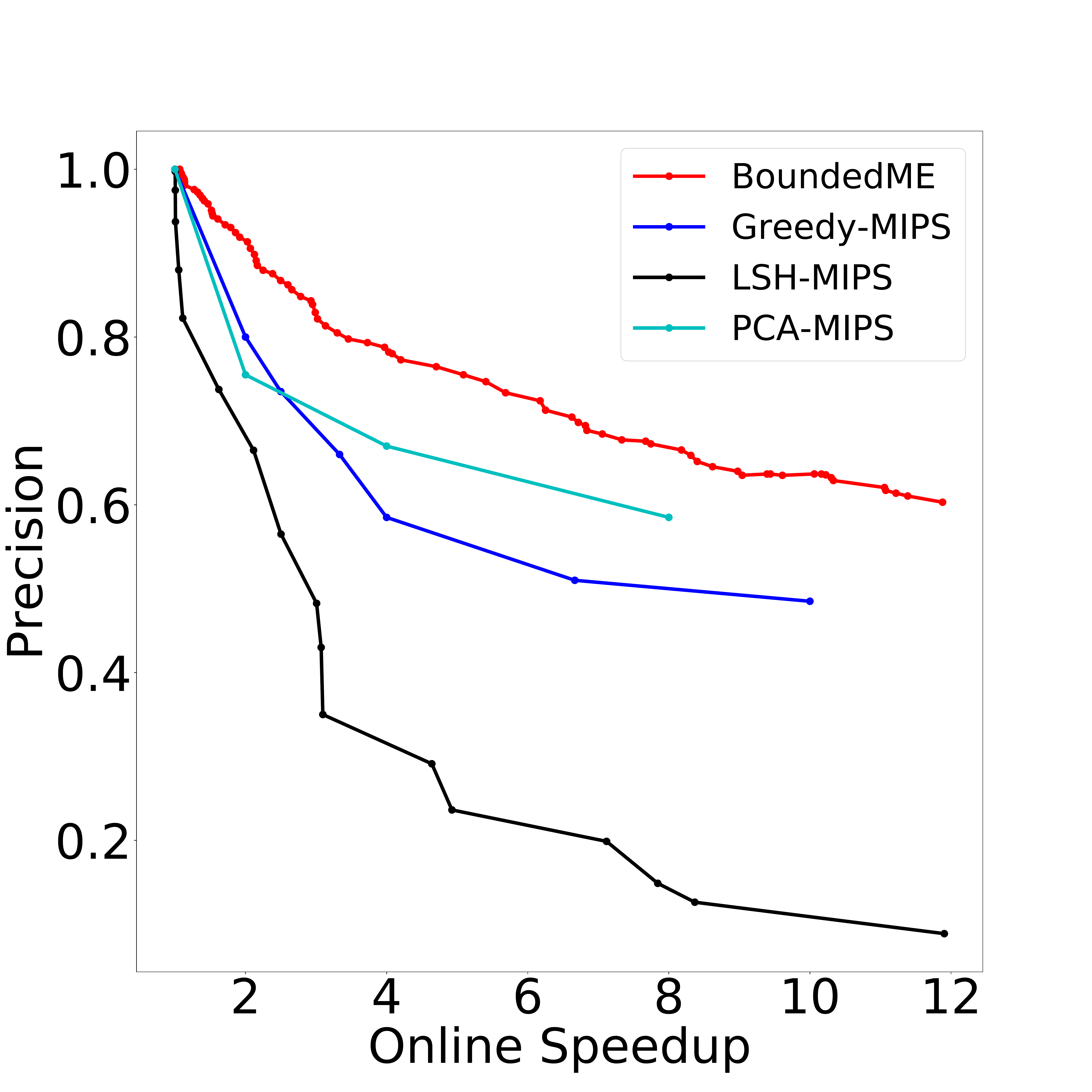}
}
\subfigure[Yahoo Dataset]{
\includegraphics[width = 3.7 cm]{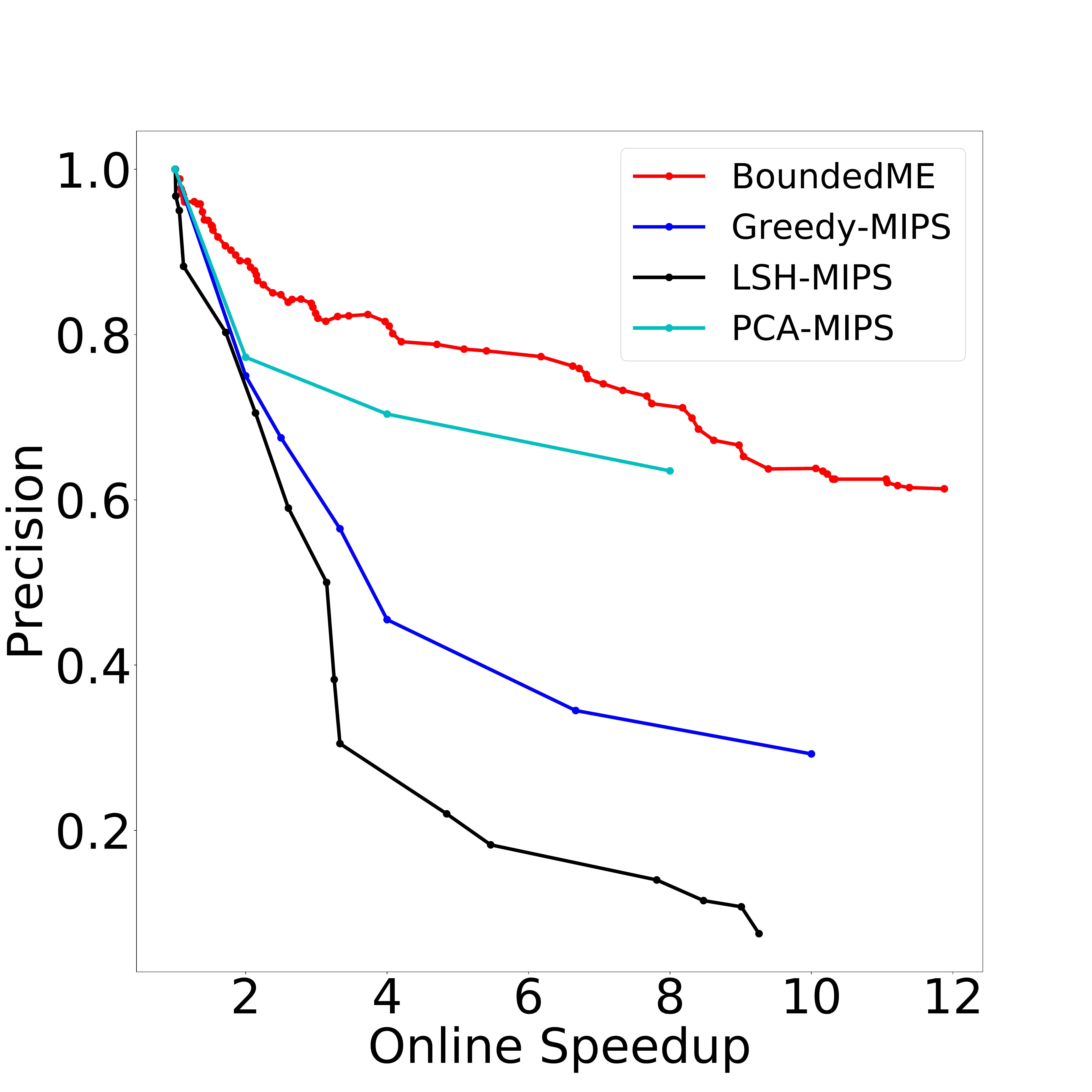}
}
\caption{Real-world datases}
\label{fig:real_world}
\end{figure}

\subsection{\ME{} vs. Other MIPS Algorithms  on Real-World Datasets}
 \label{expr:real}
 We also compare \ME{} against others on two real-world datasets, Netflix and Yahoo-Music  used   in \cite{yu2017greedy}. We use the same setting as in \cite{yu2017greedy} to compute the vector embeddings using matrix factorization. 
 The other parameters are the same as previous subsection for the case of identifying the top $5$ solutions.
Again,  as shown in Figure \ref{fig:real_world}, the precision of other methods drops more quickly than that of \ME{}, implying the superior performance of \ME{} over its counterparts.


\section{Conclusion}

We introduced a new bandit setting, \newBandit{} (MAB-BP), where the rewards are sampled \emph{without} replacement from a 
\emph{finite} list. We showed that this setting can be used for solving  important problems, such as 
Maximum Inner Product Search and Nearest Neighbor Search.
We also proposed a new algorithm, \ME{}, 
which extends the Median Elimination framework for MAB-BP settings.
Using a new concentration inequality for finite lists, we derived \ME's suboptimality guarantee and sample complexity. 
By applying \ME{} to MIPS, we improved on   state-of-the-art methods for MIPS by (1) avoiding their preprocessing step, 
	and (2) offering a knob to the user to directly control the suboptimaltiy of the results. 
We also conducted extensive experiments on both synthetic and real-world datasets,
	showing significant speedups over state-of-the-art MIPS algorithms.

\section{Acknowledgments}
This work is in part supported by National Science Foundation (grants 1629397 and 1553169). 
The authors would like to thank anynomous reviewers for their insightful comments.

\bibliographystyle{aaai}
\bibliography{mips,mozafari}
\clearpage
\section{A. Detailed Proofs}
Proofs that are omitted from the paper are included in this supplemental material.

We use the following corollary from \cite{bardenet2015concentration}. 

\begin{corollary}[Corollary 2.5 in \cite{bardenet2015concentration}] \label{CO:sample_without_replacement_original}
Let $\mathcal{X} = (x_1, x_2, \cdots, x_N)$ be a finite set of size $N>1$ in $[a, b]$ with mean $\mu = \frac{1}{N}\sum_{i=1}^Nx_i$, and $(X_1, \cdots, X_m)$ be a list of size $m< N$ sampled without replacement from $\mathcal{X}$.
  Then for any $m\leq N$, and any $\delta \in [0,1]$, it holds 
\begin{equation}
\mathbb{P}\left[ \frac{1}{m}\sum_{t=1}^m X_t - \mu \leq (b-a)\sqrt{\frac{\rho_m \log(1/\delta)}{2m}} \right] \geq 1-\delta
\end{equation}
where $\rho_m$ is defined as
\begin{equation}
\rho_m =\mathop{min} \left\{
(1-\frac{m-1}{N}),
(1-\frac{m}{N})(1+1/m)
\right\}
\end{equation}
\end{corollary}

Based on this corollary, we  have the following lemma.  
\begin{lemma} \label{LM:sample_without_replacement}
Let $\mathcal{X} = (x_1, x_2, \cdots, x_N)$ be a finite set of size $N>1$ in $[a, b]$ with mean $\mu = \frac{1}{N}\sum_{i=1}^Nx_i$, and $(X_1, \cdots, X_m)$ be a list of size $m< N$ sampled without replacement from $\mathcal{X}$.
Then, for any given  $0<\epsilon<1, 0<\delta<1$,  if 
\begin{equation}
m\geq \mathop{min}\{\frac{u+1}{1+\frac{u}{N}}, \frac{u+\frac{u}{N}}{1+\frac{u}{N}} \}
\end{equation}
where $u= \frac{\log(1/\delta)}{2} \frac{(b-a)^2}{\epsilon^2}  $,
then we have
\begin{equation}
\mathbb{P}\left[ \frac{1}{m}\sum_{t=1}^m X_t - \mu \leq  \epsilon  \right] \geq 1-\delta
\end{equation} 
\end{lemma}

\begin{proof}
From Corollary \ref{CO:sample_without_replacement_original}, it is obvious that for any given $\epsilon>0, \delta>0$, as long we have
\begin{equation} \label{EQ:m_rho_m}
\frac{m}{\rho_m} \geq \frac{\log(1/\delta)}{2} \frac{(b-a)^2}{\epsilon^2}
\end{equation}
then we also have
\begin{equation}
\mathbb{P}\left[ \frac{1}{m}\sum_{t=1}^m X_t - \mu \leq  \epsilon  \right] \geq 1-\delta
\end{equation}


Denote the right hand side of Eq.~\ref{EQ:m_rho_m} as $u$.
Since $\rho_m$ is the minimum of two functions, we can equivalently solve for $m$ by using either of the two function, and then take the 
	minimum of their solutions.

If $\rho= 1- \frac{m-1}{N}$ , we have
\begin{equation} 
\begin{split}
&\frac{m}{1-\frac{m-1}{N}} \geq u\\
&m \geq u - \frac{m-1}{N}u\\
&(1+\frac{u}{N})m \geq u+\frac{u}{N}\\
&m\geq \frac{u+\frac{u}{N}}{1+\frac{u}{N}}
\end{split}
\label{eq:case1}
\end{equation}

If $\rho = \frac{m}{(1-\frac{m}{N})(1+\frac{1}{m})}$ , we have
\begin{equation}
\begin{split}
&\frac{m}{(1-\frac{m}{N})(1+\frac{1}{m})} \geq u\\
&m \geq [(1+\frac{1}{m}) - \frac{m}{N} -\frac{1}{N}] u\\
&m\geq u+\frac{u}{m} - \frac{u}{N}m -\frac{u}{N}\\
&(1+\frac{u}{N})m - \frac{u}{m} \geq u-\frac{u}{N} \\
&(1+\frac{u}{N})m^2 -(u-\frac{u}{N})m - u \geq 0\\
\end{split}
\end{equation}

We need to find an $m$ that satisfies 
$(1+\frac{u}{N})m^2 -(u-\frac{u}{N})m - u \geq 0$. 
However, the closed-form solution would be quite complicated. 
We thus relax it slightly, as follows. 
We know that for any $m$ that satisfies $(1+\frac{u}{N})m^2 -(u-\frac{u}{N})m - u -1 \geq 0$, it must also satisfy $(1+\frac{u}{N})m^2 -(u-\frac{u}{N})m - u \geq 0$. 
Therefore, we relax this problem to $(1+\frac{u}{N})m^2 -(u-\frac{u}{N})m - u -1 \geq 0$. The solution of the relaxed problem is also a valid solution to the original problem. 
To solve the relaxed problem,
\begin{equation}
\begin{split}
(1+\frac{u}{N})m^2 -(u-\frac{u}{N})m - u -1 &\geq 0\\
[(1+\frac{u}{N})m - u -1][m+1] &\geq 0\\
m&\geq \frac{u+1}{1+\frac{u}{N}} \\
\end{split}
\label{eq:case2}
\end{equation}

Combining (\ref{eq:case1}) and (\ref{eq:case2}), 
we have
\begin{equation}
m\geq \mathop{min}\{\frac{u+1}{1+\frac{u}{N}}, \frac{u+\frac{u}{N}}{1+\frac{u}{N}} \}
\end{equation} 

We would like the sample size  to be as small as possible. Thus, we simply use the equal sign from above inequality.
	 It is easy to see that, for any $u\geq 0$, we have $0 \leq m \leq N $.
\end{proof}

\begin{lemma} \label{LM:one_round_pac}
For   Algorithm 1, at any iteration $l$,  we have
 \begin{equation}
\mathbb{P}\left[ \tilde{p}_{S_l}\leq \tilde{p}_{S_{l+1}} +\epsilon_l  \right] \geq 1-\delta_l
\end{equation}
\end{lemma}
\begin{proof}
Let us denote $z_l$ be the arm with K-th highest true mean of reward in $S_l$.
Let $\tilde{S}_{l}^{\epsilon} = \{ a\in S_l: p_{z_l} -p_a \leq \epsilon   \}$.
At the end of each round in Algorithm 1, we remove $\left \lceil\frac{|S_l| -K}{2} \right\rceil$ arms with the least empirical means. 
In other words, $K+\left \lfloor\frac{|S_l| -K}{2} \right\rfloor$ arms with the highest empirical means will remain in $S_{l+1}$.
When there are at least $K$ arms from $\tilde{S}_{l}^{\epsilon}$ that end up in $S_{l+1}$,
 the K-th highest true mean of the arms in $S_{l+1}$ will be within $\epsilon_l$ of the K-th highest true mean of the arms in $S_l$.
Hence, the event $\tilde{p}_{S_l}\leq \tilde{p}_{S_{l+1}} +\epsilon_l$ fails when there are
strictly more than $K+\left \lfloor\frac{|S_l| -K}{2} \right\rfloor - K = \left \lfloor\frac{|S_l| -K}{2} \right\rfloor$ arms in $S_l\setminus \tilde{S}_{l}^{\epsilon}$ whose empirical means are greater than the empirical mean of every arm in $\tilde{S}_{l}^{\epsilon}$. 

Let us compute the probability that an arm from $S_l\setminus \tilde{S}_{l}^{\epsilon}$ is better than every arm from $\tilde{S}_{l}^{\epsilon}$ in terms of the empirical mean.  
For any $j\in S_l\setminus \tilde{S}_{l}^{\epsilon}$, we have
\begin{equation}
\begin{split}
&\mathbb{P}\left[ \hat{p}_j \geq \hat{p}_i ~~ \forall i \in \tilde{S}_l^{\epsilon_l}   \right]\\
\leq & \mathbb{P}\left[ \hat{p}_j \geq \hat{p}_{z_l} \right]\\
\leq & \mathbb{P}\left[ \hat{p}_j \geq p_j + \epsilon_l/2 \text{ or }   \hat{p}_{z_l} \leq p_{z_l} - \epsilon_{l}/2   \right]\\
\leq & \mathbb{P}\left[ \hat{p}_j \geq p_j + \epsilon_l/2 \right] + \mathbb{P}\left[   \hat{p}_{z_l} \leq p_{z_l} - \epsilon_{l}/2   \right]
\end{split}
\end{equation}

According to Lemma \ref{LM:sample_without_replacement}, we know that after pulling each arm for 
$m\left( \frac{2}{\epsilon_l^2} \log\left( \frac{2(|S_l|-K)}{\delta_l \left(\left \lfloor\frac{|S_l| -K}{2} \right\rfloor +1  \right)} \right)    \right)$
times,  the above probability will be bounded by $\frac{\left \lfloor\frac{|S_l| -K}{2} \right\rfloor +1}{|S_l|-K} \delta_l$.
Let $n_{bad}$ be the number of arms in $S_l\setminus \tilde{S}_{l}^{\epsilon}$ that are better than every arm in $\tilde{S}_{l}^{\epsilon}$ in terms of their empirical mean of rewards.   Clearly,
\begin{equation}
\begin{split}
\mathbb{E}[n_{bad}] &\leq (|S_l|-K)\frac{\left \lfloor\frac{|S_l| -K}{2} \right\rfloor+1}{|S_l|-K} \delta_l\\
&\leq \left( \left\lfloor\frac{|S_l| -K}{2} \right\rfloor +1 \right)\delta_l 
\end{split}
\end{equation}

Using Markov inequality, we can bound the probability of failure as  
\begin{equation}
\begin{split}
&\mathbb{P}\left[ n_{bad} \geq \left\lfloor\frac{|S_l| -K}{2} \right\rfloor +1 \right]\\
\leq &\frac{\mathbb{E}[n_{bad}]}{\left\lfloor\frac{|S_l| -K}{2} \right\rfloor+1}\\
\leq & \delta_l
\end{split}
\end{equation}
\end{proof}

We can now prove the main property of Algorithm 1.   

\begin{theorem} \label{THM:pac_me}
The \ME{} algorithm (Algorithm 1) is guaranteed to return $\epsilon$-optimal solution with probability at least $1-\delta$.
\end{theorem}
\begin{proof}
It follows immediately from Lemma \ref{LM:one_round_pac} since 
$$\sum_{i =1}^{\log n} \delta_i =  \sum_{i =1}^{\log n} \frac{\delta}{2^i} \leq \sum_{i =1}^{\infty} \frac{\delta}{2^i} = \delta$$
 and 
 $$ \sum_{i =1}^{\log n} \epsilon_i  = \sum_{i =1}^{\log n} \frac{\epsilon}{4}\left(  \frac{3}{4}\right)^{i-1} \leq \sum_{i =1}^{\infty} \frac{\epsilon}{4}\left(  \frac{3}{4}\right)^{i-1} = \epsilon$$
\end{proof}

\begin{corollary}
For each arm, the number of times it is pulled by Algorithm 1 is upper-bounded by $N$.
\end{corollary}
\begin{proof}
Based on the definition of function $m(u)$, 
	it is not hard to see that $m(u)\leq N$ if $u\geq 0$.
Moreover, at round $l$ of Algorithm 1, the accumulated number of times that an arm in $S_l$ has been pulled
is
\begin{equation}
t_l = m\left( \frac{2}{\epsilon_l^2} \log\left( \frac{2(|S_l|-K)}{\delta_l \left(\left \lfloor\frac{|S_l| -K}{2} \right\rfloor +1  \right)} \right)    \right)
\end{equation}
Clearly, $t_l\leq N$. Thus, the total number of times any arm is pulled is upper-bounded by $N$.
\end{proof}

 \begin{corollary}
The time complexity of Algorithm 1 is $O\left( \frac{n\sqrt{N}}{\epsilon}\sqrt{\log\left(\frac{1}{\delta} \right)}  \right)$.
\end{corollary}
\begin{proof}
At round $l$, we know that $\epsilon_l = \frac{\epsilon}{4}\left(\frac{3}{4}\right)^{l-1}$ and $\delta_l = \frac{\delta}{2^l} $. 
The number of arms that will be removed at the end of  the $l$-th iteration is no greater than $\frac{n}{2^l}$.
It is easy to see that for the arms that are removed at the end of the $l$-th iteration, the total number of pulls for any of them will be $t_l$.
Further, we know that $m(u)\leq O(\sqrt{Nu})$ for $u>0$. Also, note that $K$ is typically a small constant, e.g., 1, 5, or 10. 
Therefore, the sample complexity is 
\begin{equation}
\begin{split}
&\sum_{l=1}^{\log n} n_l t_l \\
\leq & \sum_{l=1}^{\log n} \frac{n}{2^l} m\left( \frac{2}{\epsilon_l^2} \log\left( \frac{2(|S_l|-K)}{\delta_l \left(\left \lfloor\frac{|S_l| -K}{2} \right\rfloor +1  \right)} \right)    \right)\\
\leq & \sum_{l=1}^{\log n} \frac{n\sqrt{N}}{2^l} \frac{\sqrt{2}}{\epsilon_l} \sqrt{\log\left( \frac{2(|S_l|-K)}{\delta_l \left(\left \lfloor\frac{|S_l| -K}{2} \right\rfloor +1  \right)} \right)}\\
\leq & O\left( \frac{n\sqrt{N}}{\epsilon}\sqrt{\log\left(\frac{1}{\delta} \right)}  \right)
\end{split}
\end{equation}
\end{proof}

\end{document}



\section{A. Detailed Proofs}
Proofs that are omitted from the paper are included in this supplemental material.

We use the following corollary from \cite{bardenet2015concentration}. 

\begin{corollary}[Corollary 2.5 in \cite{bardenet2015concentration}] \label{CO:sample_without_replacement_original}
Let $\mathcal{X} = (x_1, x_2, \cdots, x_N)$ be a finite set of size $N>1$ in $[a, b]$ with mean $\mu = \frac{1}{N}\sum_{i=1}^Nx_i$, and $(X_1, \cdots, X_m)$ be a list of size $m< N$ sampled without replacement from $\mathcal{X}$.
  Then for any $m\leq N$, and any $\delta \in [0,1]$, it holds 
\begin{equation}
\mathbb{P}\left[ \frac{1}{m}\sum_{t=1}^m X_t - \mu \leq (b-a)\sqrt{\frac{\rho_m \log(1/\delta)}{2m}} \right] \geq 1-\delta
\end{equation}
where $\rho_m$ is defined as
\begin{equation}
\rho_m =\mathop{min} \left\{
(1-\frac{m-1}{N}),
(1-\frac{m}{N})(1+1/m)
\right\}
\end{equation}
\end{corollary}

Based on this corollary, we  have the following lemma.  
\begin{lemma} \label{LM:sample_without_replacement}
Let $\mathcal{X} = (x_1, x_2, \cdots, x_N)$ be a finite set of size $N>1$ in $[a, b]$ with mean $\mu = \frac{1}{N}\sum_{i=1}^Nx_i$, and $(X_1, \cdots, X_m)$ be a list of size $m< N$ sampled without replacement from $\mathcal{X}$.
Then, for any given  $0<\epsilon<1, 0<\delta<1$,  if 
\begin{equation}
m\geq \mathop{min}\{\frac{u+1}{1+\frac{u}{N}}, \frac{u+\frac{u}{N}}{1+\frac{u}{N}} \}
\end{equation}
where $u= \frac{\log(1/\delta)}{2} \frac{(b-a)^2}{\epsilon^2}  $,
then we have
\begin{equation}
\mathbb{P}\left[ \frac{1}{m}\sum_{t=1}^m X_t - \mu \leq  \epsilon  \right] \geq 1-\delta
\end{equation} 
\end{lemma}

\begin{proof}
From Corollary \ref{CO:sample_without_replacement_original}, it is obvious that for any given $\epsilon>0, \delta>0$, as long we have
\begin{equation} \label{EQ:m_rho_m}
\frac{m}{\rho_m} \geq \frac{\log(1/\delta)}{2} \frac{(b-a)^2}{\epsilon^2}
\end{equation}
then we also have
\begin{equation}
\mathbb{P}\left[ \frac{1}{m}\sum_{t=1}^m X_t - \mu \leq  \epsilon  \right] \geq 1-\delta
\end{equation}


Denote the right hand side of Eq.~\ref{EQ:m_rho_m} as $u$.
Since $\rho_m$ is the minimum of two functions, we can equivalently solve for $m$ by using either of the two function, and then take the 
	minimum of their solutions.

If $\rho= 1- \frac{m-1}{N}$ , we have
\begin{equation} 
\begin{split}
&\frac{m}{1-\frac{m-1}{N}} \geq u\\
&m \geq u - \frac{m-1}{N}u\\
&(1+\frac{u}{N})m \geq u+\frac{u}{N}\\
&m\geq \frac{u+\frac{u}{N}}{1+\frac{u}{N}}
\end{split}
\label{eq:case1}
\end{equation}

If $\rho = \frac{m}{(1-\frac{m}{N})(1+\frac{1}{m})}$ , we have
\begin{equation}
\begin{split}
&\frac{m}{(1-\frac{m}{N})(1+\frac{1}{m})} \geq u\\
&m \geq [(1+\frac{1}{m}) - \frac{m}{N} -\frac{1}{N}] u\\
&m\geq u+\frac{u}{m} - \frac{u}{N}m -\frac{u}{N}\\
&(1+\frac{u}{N})m - \frac{u}{m} \geq u-\frac{u}{N} \\
&(1+\frac{u}{N})m^2 -(u-\frac{u}{N})m - u \geq 0\\
\end{split}
\end{equation}

We need to find an $m$ that satisfies 
$(1+\frac{u}{N})m^2 -(u-\frac{u}{N})m - u \geq 0$. 
However, the closed-form solution would be quite complicated. 
We thus relax it slightly, as follows. 
We know that for any $m$ that satisfies $(1+\frac{u}{N})m^2 -(u-\frac{u}{N})m - u -1 \geq 0$, it must also satisfy $(1+\frac{u}{N})m^2 -(u-\frac{u}{N})m - u \geq 0$. 
Therefore, we relax this problem to $(1+\frac{u}{N})m^2 -(u-\frac{u}{N})m - u -1 \geq 0$. The solution of the relaxed problem is also a valid solution to the original problem. 
To solve the relaxed problem,
\begin{equation}
\begin{split}
(1+\frac{u}{N})m^2 -(u-\frac{u}{N})m - u -1 &\geq 0\\
[(1+\frac{u}{N})m - u -1][m+1] &\geq 0\\
m&\geq \frac{u+1}{1+\frac{u}{N}} \\
\end{split}
\label{eq:case2}
\end{equation}

Combining (\ref{eq:case1}) and (\ref{eq:case2}), 
we have
\begin{equation}
m\geq \mathop{min}\{\frac{u+1}{1+\frac{u}{N}}, \frac{u+\frac{u}{N}}{1+\frac{u}{N}} \}
\end{equation} 

We would like the sample size  to be as small as possible. Thus, we simply use the equal sign from above inequality.
	 It is easy to see that, for any $u\geq 0$, we have $0 \leq m \leq N $.
\end{proof}

\begin{lemma} \label{LM:one_round_pac}
For   Algorithm 1, at any iteration $l$,  we have
 \begin{equation}
\mathbb{P}\left[ \tilde{p}_{S_l}\leq \tilde{p}_{S_{l+1}} +\epsilon_l  \right] \geq 1-\delta_l
\end{equation}
\end{lemma}
\begin{proof}
Let us denote $z_l$ be the arm with K-th highest true mean of reward in $S_l$.
Let $\tilde{S}_{l}^{\epsilon} = \{ a\in S_l: p_{z_l} -p_a \leq \epsilon   \}$.
At the end of each round in Algorithm 1, we remove $\left \lceil\frac{|S_l| -K}{2} \right\rceil$ arms with the least empirical means. 
In other words, $K+\left \lfloor\frac{|S_l| -K}{2} \right\rfloor$ arms with the highest empirical means will remain in $S_{l+1}$.
When there are at least $K$ arms from $\tilde{S}_{l}^{\epsilon}$ that end up in $S_{l+1}$,
 the K-th highest true mean of the arms in $S_{l+1}$ will be within $\epsilon_l$ of the K-th highest true mean of the arms in $S_l$.
Hence, the event $\tilde{p}_{S_l}\leq \tilde{p}_{S_{l+1}} +\epsilon_l$ fails when there are
strictly more than $K+\left \lfloor\frac{|S_l| -K}{2} \right\rfloor - K = \left \lfloor\frac{|S_l| -K}{2} \right\rfloor$ arms in $S_l\setminus \tilde{S}_{l}^{\epsilon}$ whose empirical means are greater than the empirical mean of every arm in $\tilde{S}_{l}^{\epsilon}$. 

Let us compute the probability that an arm from $S_l\setminus \tilde{S}_{l}^{\epsilon}$ is better than every arm from $\tilde{S}_{l}^{\epsilon}$ in terms of the empirical mean.  
For any $j\in S_l\setminus \tilde{S}_{l}^{\epsilon}$, we have
\begin{equation}
\begin{split}
&\mathbb{P}\left[ \hat{p}_j \geq \hat{p}_i ~~ \forall i \in \tilde{S}_l^{\epsilon_l}   \right]\\
\leq & \mathbb{P}\left[ \hat{p}_j \geq \hat{p}_{z_l} \right]\\
\leq & \mathbb{P}\left[ \hat{p}_j \geq p_j + \epsilon_l/2 \text{ or }   \hat{p}_{z_l} \leq p_{z_l} - \epsilon_{l}/2   \right]\\
\leq & \mathbb{P}\left[ \hat{p}_j \geq p_j + \epsilon_l/2 \right] + \mathbb{P}\left[   \hat{p}_{z_l} \leq p_{z_l} - \epsilon_{l}/2   \right]
\end{split}
\end{equation}

According to Lemma \ref{LM:sample_without_replacement}, we know that after pulling each arm for 
$m\left( \frac{2}{\epsilon_l^2} \log\left( \frac{2(|S_l|-K)}{\delta_l \left(\left \lfloor\frac{|S_l| -K}{2} \right\rfloor +1  \right)} \right)    \right)$
times,  the above probability will be bounded by $\frac{\left \lfloor\frac{|S_l| -K}{2} \right\rfloor +1}{|S_l|-K} \delta_l$.
Let $n_{bad}$ be the number of arms in $S_l\setminus \tilde{S}_{l}^{\epsilon}$ that are better than every arm in $\tilde{S}_{l}^{\epsilon}$ in terms of their empirical mean of rewards.   Clearly,
\begin{equation}
\begin{split}
\mathbb{E}[n_{bad}] &\leq (|S_l|-K)\frac{\left \lfloor\frac{|S_l| -K}{2} \right\rfloor+1}{|S_l|-K} \delta_l\\
&\leq \left( \left\lfloor\frac{|S_l| -K}{2} \right\rfloor +1 \right)\delta_l 
\end{split}
\end{equation}

Using Markov inequality, we can bound the probability of failure as  
\begin{equation}
\begin{split}
&\mathbb{P}\left[ n_{bad} \geq \left\lfloor\frac{|S_l| -K}{2} \right\rfloor +1 \right]\\
\leq &\frac{\mathbb{E}[n_{bad}]}{\left\lfloor\frac{|S_l| -K}{2} \right\rfloor+1}\\
\leq & \delta_l
\end{split}
\end{equation}
\end{proof}

We can now prove the main property of Algorithm 1.   

\begin{theorem} \label{THM:pac_me}
The \ME{} algorithm (Algorithm 1) is guaranteed to return $\epsilon$-optimal solution with probability at least $1-\delta$.
\end{theorem}
\begin{proof}
It follows immediately from Lemma \ref{LM:one_round_pac} since 
$$\sum_{i =1}^{\log n} \delta_i =  \sum_{i =1}^{\log n} \frac{\delta}{2^i} \leq \sum_{i =1}^{\infty} \frac{\delta}{2^i} = \delta$$
 and 
 $$ \sum_{i =1}^{\log n} \epsilon_i  = \sum_{i =1}^{\log n} \frac{\epsilon}{4}\left(  \frac{3}{4}\right)^{i-1} \leq \sum_{i =1}^{\infty} \frac{\epsilon}{4}\left(  \frac{3}{4}\right)^{i-1} = \epsilon$$
\end{proof}

\begin{corollary}
For each arm, the number of times it is pulled by Algorithm 1 is upper-bounded by $N$.
\end{corollary}
\begin{proof}
Based on the definition of function $m(u)$, 
	it is not hard to see that $m(u)\leq N$ if $u\geq 0$.
Moreover, at round $l$ of Algorithm 1, the accumulated number of times that an arm in $S_l$ has been pulled
is
\begin{equation}
t_l = m\left( \frac{2}{\epsilon_l^2} \log\left( \frac{2(|S_l|-K)}{\delta_l \left(\left \lfloor\frac{|S_l| -K}{2} \right\rfloor +1  \right)} \right)    \right)
\end{equation}
Clearly, $t_l\leq N$. Thus, the total number of times any arm is pulled is upper-bounded by $N$.
\end{proof}

 \begin{corollary}
The time complexity of Algorithm 1 is $O\left( \frac{n\sqrt{N}}{\epsilon}\sqrt{\log\left(\frac{1}{\delta} \right)}  \right)$.
\end{corollary}
\begin{proof}
At round $l$, we know that $\epsilon_l = \frac{\epsilon}{4}\left(\frac{3}{4}\right)^{l-1}$ and $\delta_l = \frac{\delta}{2^l} $. 
The number of arms that will be removed at the end of  the $l$-th iteration is no greater than $\frac{n}{2^l}$.
It is easy to see that for the arms that are removed at the end of the $l$-th iteration, the total number of pulls for any of them will be $t_l$.
Further, we know that $m(u)\leq O(\sqrt{Nu})$ for $u>0$. Also, note that $K$ is typically a small constant, e.g., 1, 5, or 10. 
Therefore, the sample complexity is 
\begin{equation}
\begin{split}
&\sum_{l=1}^{\log n} n_l t_l \\
\leq & \sum_{l=1}^{\log n} \frac{n}{2^l} m\left( \frac{2}{\epsilon_l^2} \log\left( \frac{2(|S_l|-K)}{\delta_l \left(\left \lfloor\frac{|S_l| -K}{2} \right\rfloor +1  \right)} \right)    \right)\\
\leq & \sum_{l=1}^{\log n} \frac{n\sqrt{N}}{2^l} \frac{\sqrt{2}}{\epsilon_l} \sqrt{\log\left( \frac{2(|S_l|-K)}{\delta_l \left(\left \lfloor\frac{|S_l| -K}{2} \right\rfloor +1  \right)} \right)}\\
\leq & O\left( \frac{n\sqrt{N}}{\epsilon}\sqrt{\log\left(\frac{1}{\delta} \right)}  \right)
\end{split}
\end{equation}
\end{proof}

\bibliographystyle{aaai}
\bibliography{mips}
\pagenumbering{gobble}